\documentclass{article}

% if you need to pass options to natbib, use, e.g.:
%     \PassOptionsToPackage{numbers, compress}{natbib}
% before loading neurips_2024

% ready for submission
% \usepackage{neurips_2024}

% to compile a preprint version, e.g., for submission to arXiv, add add the
% [preprint] option:
\PassOptionsToPackage{numbers, compress}{natbib}
% before loading neurips_2024

% ready for submission
\usepackage[preprint]{neurips_2024}

% to compile a camera-ready version, add the [final] option, e.g.:
%     \usepackage[final]{neurips_2024}

% to avoid loading the natbib package, add option nonatbib:
%    \usepackage[nonatbib]{neurips_2024}

\usepackage[utf8]{inputenc} % allow utf-8 input
\usepackage[T1]{fontenc}    % use 8-bit T1 fonts
\usepackage{url}            % simple URL typesetting

\usepackage{booktabs}       % professional-quality tables
\usepackage{amsfonts}       % blackboard math symbols
\usepackage{nicefrac}       % compact symbols for 1/2, etc.

\usepackage{microtype}      % microtypography
\usepackage{xcolor}         % colors
\usepackage{graphicx}
\usepackage{algorithm}
\usepackage{multirow}
\usepackage{algorithmic}
\usepackage{subcaption}
\usepackage{forloop}
\usepackage{ifthen}
\usepackage{enumitem}
\usepackage{soul}
\usepackage{amsmath}

\newcommand{\para}[1]{\noindent{\textbf{{#1}}}}
\usepackage{amssymb}
\newcommand{\ie}{{\textit{i.e.}}}
\usepackage{wrapfig}
\usepackage{adjustbox}

\usepackage[pagebackref,breaklinks,colorlinks,citecolor=blue]{hyperref}
\usepackage{amsthm}
\usepackage{mathrsfs}
\newtheorem{definition}{Definition}
\title{E$^3$-Net: Efficient E(3)-Equivariant Normal Estimation Network}

% The \author macro works with any number of authors. There are two commands
% used to separate the names and addresses of multiple authors: \And and \AND.
%
% Using \And between authors leaves it to LaTeX to determine where to break the
% lines. Using \AND forces a line break at that point. So, if LaTeX puts 3 of 4
% authors names on the first line, and the last on the second line, try using
% \AND instead of \And before the third author name.

\author{%
  % David S.~Hippocampus\thanks{Use footnote for providing further information
  %   about author (webpage, alternative address)---\emph{not} for acknowledging
  %   funding agencies.} \\
  % Department of Computer Science\\
  % Cranberry-Lemon University\\
  % Pittsburgh, PA 15213 \\
  % \texttt{hippo@cs.cranberry-lemon.edu} \\
   Hanxiao Wang\\
   CASIA\\
   % Department of Computer Science\\
   % Cranberry-Lemon University\\
   % Pittsburgh, PA 15213 \\
   % \texttt{hippo@cs.cranberry-lemon.edu} \\
  % examples of more authors
  \And
  Mingyang Zhao\\
  HKISI\\
  \And
  Weize Quan\\
  CASIA\\
  \And
  Zhen Chen\\
  HKISI\\
  \And
  Dong-ming Yan\\
  CASIA\\
  \And
  Peter Wonka\\
  KAUST\\
  % Coauthor \\
  % Affiliation \\
  % Address \\
  % \texttt{email} \\
  % \AND
  % Coauthor \\
  % Affiliation \\
  % Address \\
  % \texttt{email} \\
  % \And
  % Coauthor \\
  % Affiliation \\
  % Address \\
  % \texttt{email} \\
  % \And
  % Coauthor \\
  % Affiliation \\
  % Address \\
  % \texttt{email} \\
}

\begin{document}

\maketitle

\begin{abstract}
Point cloud normal estimation is a fundamental task in 3D geometry processing. While recent learning-based methods achieve notable advancements in normal prediction, they often overlook the critical aspect of \emph{equivariance}. This results in inefficient learning of symmetric patterns. To address this issue, we propose E$^3$-Net to achieve equivariance for normal estimation. We introduce an efficient random frame method, which significantly reduces the training resources required for this task to just 1/8 of previous work and improves the accuracy. Further, we design a Gaussian-weighted loss function and a receptive-aware inference strategy that effectively utilizes the local properties of point clouds. Our method achieves superior results on both synthetic and real-world datasets, and outperforms current state-of-the-art techniques by a substantial margin.
We improve RMSE by 4\% on the PCPNet dataset, 2.67\% on the SceneNN dataset, and 2.44\% on the FamousShape dataset.
% particularly in scenarios involving complex geometry or non-uniformly sampled data, where it achieves an improvement of over 10\%. 
% Extensive ablation studies further validate the efficacy of each component of our method in handling challenging point clouds.

\end{abstract}

% \begin{teaserfigure}
%   \includegraphics[width=\textwidth]{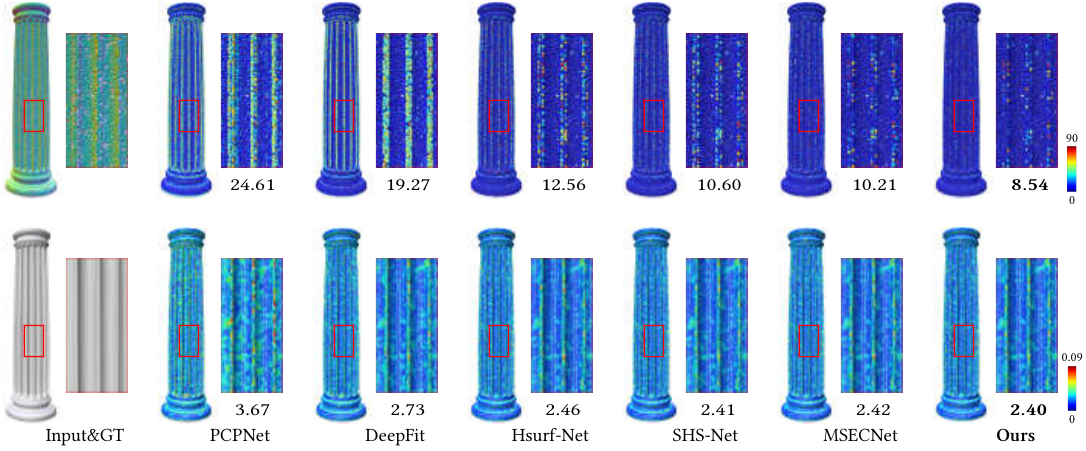}
%   %\vskip -0.3cm
%   \caption{We propose E$^3$-Net, an efficient {E}(3)-equivariant normal estimation network for point cloud analysis. Top: Normal estimation results and the corresponding angle root mean squared error. Bottom: The Poisson reconstruction results and the Hausdorff distance to the ground truth surface {(scaled by 
%  ${10}^{-4} \times$ the diagonal length of the bounding box)}.}
%   % \Description{We propose a  Efficient E(3)-Equivariant Normal estimation method.Comparative Analysis of Normal estimation with the Noisy Liberty Model. Top row:   the results of the  Normal estimation. Bottom row: the reconstruction results, accompanied by the corresponding RMSE (Root Mean Square Error) values.}
%   \label{fig:teaser}
% \end{teaserfigure}

\section{Introduction}
Normal estimation for point cloud analysis is a fundamental and essential task in 3D vision. It involves accurately determining the surface normals for each point and thus plays a crucial role in numerous applications, \emph{e.g.}, 3D reconstruction~\citep{kazhdan2006poisson, kazhdan2013screened}, object recognition~\citep{drost2010model,brostow2008segmentation}, denoising~\citep{Fleishman_2003, Avron_2010}, ... 
% Despite significant advancements in this area, existing approaches still face challenges when dealing with point clouds that exhibit characteristics such as noise, non-uniform density, or complex geometry.% (Fig.~\ref{fig:teaser}).

Traditional methods for normal estimation include, \emph{Principal Component Analysis} (PCA)~\citep{hoppe1992surfacepca}, n-jet~\citep{cazals2005estimatingnjet}, \emph{Moving Least Squares} (MLS)~\citep{levin1998approximationmls,mitra2003estimating}, Voronoi-based method~\citep{merigot2010voronoi}, Hough transform~\citep{boulch2012fast} and winding-number field~\citep{xu2023globally}.
% , and Local Surface Fitting~\citep{weiss2002advanced}. 
While these classical methods have laid the groundwork for point cloud processing, their limitations in handling noise, computational efficiency, adaptability to complex structures, and dependency on data density necessitate further advancements in the field.

Recent neural network-based algorithms for estimating normals in point clouds can overcome many of the challenges of traditional methods. PCPNet~\citep{guerrero2018pcpnet} proposed a single-patch-based framework, which was followed by n-jet fitting methods~\citep{ben2020deepfit,zhu2021adafit,li2022graphfit} and implicit surface methods~\citep{li2022hsurf,li2023shs}. Recently, MSECNet~\citep{xiu2023msecnet} introduced a patch-to-patch approach, further improving efficiency and accuracy. 
Additionally, compared with traditional approaches like PCA and MLS, learning-based methods lack invariant properties that are independent of the point cloud's embedding in three-dimensional (3D) space, particularly those relying on GNN~\citep{Scarselli_2009gnn} or PointNet~\citep{Charles_2017pointnet} architectures. As a result, their results may vary with changes in orientation or position, as these methods do not inherently guarantee such properties. Equivariant networks like frame averaging~\citep{Puny_2021frameaverage} and E($n$)-GNN~\citep{satorras2021nEngnn} can mitigate this issue but often require significantly increased inference time or parameter count for training.

To address the aforementioned issues, building on the analysis of E(3)-equivariance presented in Puny \textit{et al.}~\cite{Puny_2021frameaverage}, we develop a novel method termed \emph{E$^3$-Net}, an \emph{Efficient E(3)-Equivariant} network for normal estimation that enables both \emph{equivariance} and \emph{efficiency}.
Instead of employing a large network that processes multiple frames jointly, we utilize a smaller, more efficient network that processes one frame at a time. This approach requires only 1/8 of the training resources.

Furthermore, we tackle inefficiencies due to the number of patches being processed.  Processing a single patch for each point, may have higher accuracy, but is highly inefficient. In contrast, predicting normals for all points within a patch significantly increases efficiency but introduces issues at boundaries, as boundary points have a smaller receptive field. To address these challenges, we develop a new loss function that incorporates Gaussian weighting, which enables simultaneous prediction of normals for all points in an entire patch.

To increase the receptive field during the testing phase, we introduce a novel receptive-aware inference strategy that supplants the traditional k-nearest neighbor method with geodesic patch sampling. Additionally, we refine our aggregation strategy to be receptive-aware, drawing inspiration from mean filtering and Gaussian filtering, as reviewed by Buades et al.~\citep{buades2005review}. This method effectively harnesses information from overlapping patches, thus significantly improving the performance of our model.

We conduct extensive comparative experiments to demonstrate the efficacy of the proposed method on typical benchmark datasets, including PCPNet~\citep{guerrero2018pcpnet}, FamousShape~\citep{li2023shs}, and real-world SceneNN~\citep{li2022hsurf}. Our method consistently outperforms current state-of-the-art models.
% , showcasing an improvement up to 10\% with noiseless or non-uniformly sampled data. 
Furthermore, we validate the practical application potential of our algorithm by utilizing the predicted normals for 3D Poission reconstruction~\citep{kazhdan2006poisson}. To summarize, the main contributions of this work are as follows:
\begin{itemize}[leftmargin=*]
\item a novel method for random frame training which ensures both efficiency and equivariance in the network (See Sec.~\ref{subsec:E3Equivarance}).
%\item We introduce a training strategy with random frames and an inference strategy with average frames, endowing the network with E3 equivariance.
%{\item An inference strategy that includes extracting geodesic patches and employing proximity-based selection, combined with Gaussian weighting. This approach effectively leverages the local properties of point clouds, resulting in enhanced aggregation capabilities.

\item a novel Gaussian-weighted loss function, for efficient patch-based processing without loss of accuracy (See Sec.~\ref{subsec:LossFunction}).

\item a receptive-aware inference strategy using geodesic patches and weighted aggregation (See Sec.~\ref{subsec:Inference}).

% \item Our method achieves state-of-the-art results on diverse datasets and surpasses competing approaches by a significant margin.

\end{itemize}

\section{Related Work}
We present an overview of traditional normal estimation,  learning-based normal estimation, and E($n$)-equivariant networks.

\para{Traditional Normal Estimation.} In point cloud normal estimation, traditional methods typically involve analyzing a small neighborhood around each point. PCA~\citep{hoppe1992surfacepca} examined local covariance within a small area to determine the direction of minimal variance as the normal. MLS~\citep{levin1998approximationmls} fits a plane to local points for normal estimation. n-Jets~\citep{cazals2005estimatingnjet} estimated normals by fitting higher-order polynomial surfaces to point cloud data, while Merigot \textit{et al.}~\cite{merigot2010voronoi} leveraged the covariance matrices of Voronoi cells to predict normals.
The Hough Transform method~\citep{boulch2012fast} interpreted the filled accumulator as a discrete probability distribution, where the most probable normal direction was determined by the maximum value in this distribution. These traditional methods are generally sensitive to noisy data and do not perform as well as learning-based alternatives.

\para{Learning-based Normal Estimation.}
Early work, PCPNet~\citep{guerrero2018pcpnet} proposed a patch-based normal prediction framework. Zhou \textit{et al.}~\cite{zhou2020normal} employed local plane constraints and multi-scale selection in their approach. MTRNet~\citep{cao2021latent} focused on learning latent tangent spaces combined with a compact multi-scale approach. Furthermore, Nesti-Net~\citep{ben2019nesti} integrated multi-scale analysis with 3D convolutional neural networks. Lessen \textit{et al.}~\cite{lenssen2020deep} proposed a method involving iterative surface fitting, neural network-driven inference, and adaptive kernel refinement. In addition, Refine-Net~\citep{zhou2022refine} integrated advanced feature representations with a novel connection module within a geometrically-focused framework. Zhang \textit{et al.}~\cite{zhang2022geometry} introduced a geometry-weighted guidance approach, combined with multi-scale normal integration and error correction networks. In jet-based approaches, DeepFit~\citep{ben2020deepfit} combined PointNet-based weight prediction with jet fitting to enhance the robustness and data efficiency in normal estimation. AdaFit~\citep{zhu2021adafit} introduced an offset prediction mechanism in jet fitting to adaptively handle complex surface geometries, while Graphfit~\citep{li2022graphfit} combined graph-convolutional learning, adaptive attention-based fusion, and multi-scale representation to further enhance normal estimation accuracy. Du \textit{et al.}~\cite{du2023rethinking} focused on $z$-direction transformation and normal error estimation, which can be flexibly integrated with current polynomial surface fitting methods for point cloud normal estimation. Additionally, NeAF~\citep{li2023neaf} innovatively learned an angle field around point normals, predicting offsets between input vectors and true normals. HSurf-Net~\citep{li2022hsurf} combined hyper surface fitting in a high-dimensional feature space without the need for adjusting the order. Recent developments like SHS-Net~\citep{li2023shs} and NGL~\citep{li2023neural} focused on oriented point normal estimation, typically using a one-patch-per-point prediction approach, which is less efficient. More recently, MSECNet~\citep{xiu2023msecnet} adopted a patch-to-patch prediction method, enhancing efficiency over other methods. Different from previous approaches, our work features all three important attributes: equivariance, efficiency, and accuracy (beating all previous methods).

\begin{wrapfigure}{r}{0.5\textwidth} %
    \centering
     % \vskip -1 cm
    \includegraphics[width=0.35\textwidth]{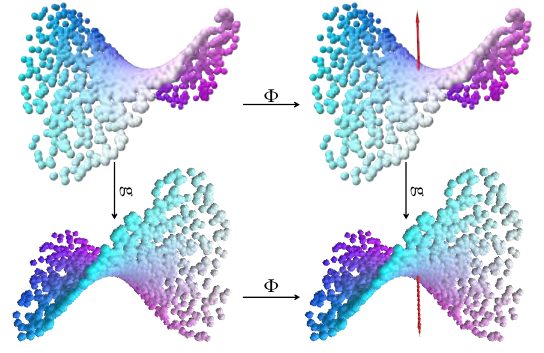} 
    \caption{An illustration of the E(3)-equivariance, where $\Phi$ is an E(3)-equivariant function predicting a point's normal, and \(g\) is an element of the E(3) group. These two operators are commutative, meaning that the order in which they are applied does not affect the final outcome.}
    \label{fig:enter-label1}
    % \vskip -1cm
\end{wrapfigure}
\para{E($n$)-Equivariant Networks}
Significant advancements have been achieved in neural networks regarding symmetry invariance and equivariance~\citep{deng2021vector, li2021rotation, chen2021equivariant, assaad2022vn, esteves2018learning}, expanding applications across various fields. SchNet~\citep{schutt2018schnet}
% , a continuous-filter convolutional network, 
exemplified this progress by modeling quantum interactions with rotationally invariant energies and equivariant forces. Tensor field networks~\citep{thomas2018tensor} introduced rotation, translation, and permutation equivariance at each layer, enhancing feature identification. Equivariant flows~\citep{kohler2020equivariant} innovated in symmetric energy sampling for multi-body systems, guiding the design of symmetry invariant Boltzmann generators. E($n$)-equivariant graph neural networks~\citep{satorras2021nEngnn} demonstrated its equivariance to rotations, translations, reflections, and permutations, without relying on higher-order representations. Additionally, SE(3)-transformers~\citep{fuchs2020se} marked a leap with their 3D roto-translationally equivariant self-attention module. Frame averaging \cite{Puny_2021frameaverage} emerged as a highly efficient and robust approach, adapting neural architectures to various symmetries, which led to applications in equivariant shape space learning~\citep{atzmon2022frame}. E3Sym~\citep{li2023e3sym} showcased E(3) invariance in unsupervised 3D planar reflective symmetry detection. FAENet~\citep{duval2023faenet} highlighted the use of frame averaging equivariant GNN in materials modeling, proving the effectiveness of E(3)-equivariant GNNs.
% Taking inspiration from these advancements, we incorporate the concept of equivariance into normal estimation.

\section{Preliminaries}

\begin{definition}
 The Euclidean group E($n$) that encompasses all isometric transformations  including rotations, reflections, and translations within $n$ dimension is defined as: 
\begin{equation}
    {\mathrm{E}(n)} = {\mathrm{O}(n)} \ltimes \mathbb{R}^n \triangleq \{(\mathbf{R}, \mathbf{v}) \mid \mathbf{R} \in \mathrm{O}(n), \mathbf{v} \in \mathbb{R}^n\},
\end{equation}
where $\ltimes$ is the semidirect product operator and O($n$) signifies the set of $n \times n$ orthogonal matrices expressed as:
\begin{equation}
{\mathrm{O}(n)} = \{ \mathbf{R} \in \mathbb{R}^{n\times n} \mid \mathbf{R} \mathbf{R}^T = \mathbf{R}^T \mathbf{R} = \mathbf{I}, \det(\mathbf{R}) = \pm 1 \}.
\end{equation}   
\end{definition}

\begin{definition}
A function \(\Phi\) is considered to be \emph{E($n$)-equivariant} if, when an input \(\mathbf{x}\) is transformed by an element $g\in\mathrm{E}(n)$, the resulting output of the function is likewise transformed by \(g\). Formally, this is expressed as:
\begin{equation}
    \Phi(g(\mathbf{x})) = g(\Phi(\mathbf{x})), \ \text{for all} \ g \in {\mathrm{E}(n)}.
     \label{eq:e3}
\end{equation}  
\end{definition}

This concept plays a crucial role in normal estimation. As shown in Fig.~\ref{fig:enter-label1}, equivariance ensures that estimating normals using a local patch and then applying an isometric transformation to the estimated normals is equivalent to first applying an isometric transformation to the local patch and then estimating the normals. The outcome remains consistent regardless of the order in which isometric transformation and normal estimation are applied.

\para{Frame Construction.}
Given an input point patch $\mathbf{x}\in\mathbb{R}^{m\times 3}$, we first calculate its centroid denoted as \(\boldsymbol{t}\), and then we compute its covariance matrix. Assume that the eigenvalues of this covariance matrix are ordered in ascending order,  \emph{e.g.}, \(\lambda_1\leq\lambda_2\leq\lambda_3\). Corresponding to these eigenvalues, we have normalized eigenvectors denoted by \(\boldsymbol{v}_1, \boldsymbol{v}_2,\boldsymbol{v}_3\). Then, the frame set that relates $\boldsymbol{v}_i$ and $\boldsymbol{t}$ belonging to $\mathrm{E}(3)$ is constructed as~\citep{Puny_2021frameaverage}
\begin{equation}
    \mathcal{F}(\mathbf{x}) = \left\{ \left( [\pm \boldsymbol{v}_1, \pm\boldsymbol{v}_2, \pm \boldsymbol{v}_3], \boldsymbol{t} \right)  \right\} \subset \mathrm{E}(3).
    \label{eq:e3}
\end{equation}
Note that the frame set has \(2^3\) elements. 
Compared to the uncountable E(3), the frame set constructed in Eq.~\ref{eq:e3} contains only a finite number of elements (8), yet it includes all the information necessary for the algorithm to possess E(3)-equivariance, which offers a systematic approach for algorithm design.

\begin{figure*}[t]
    \centering
    % \vskip -0.3cm
    \includegraphics[width=1\linewidth]{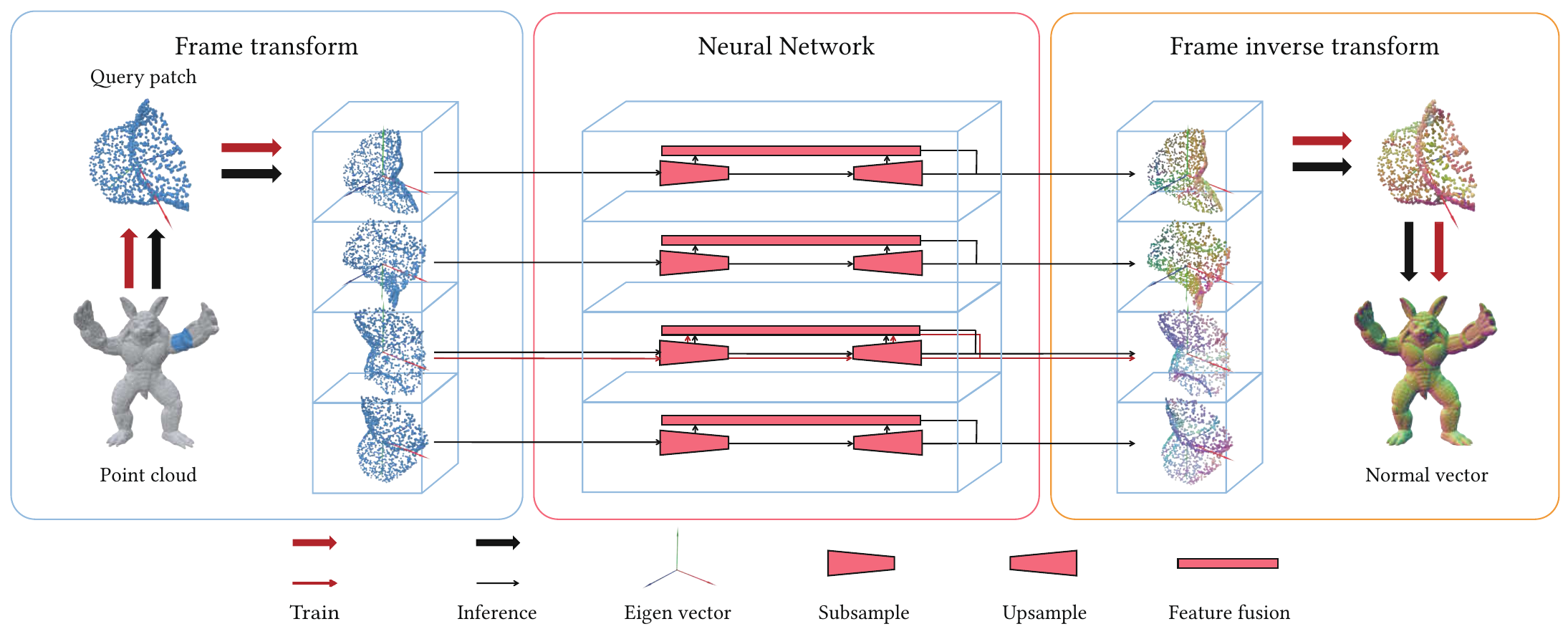}
    % \vskip -0.3cm
    \caption{Illustration of the proposed framework. The red arrows depict the training process, while the black arrows represent the inference phase. The diagram only shows 4 elements in the frame set.}

    \label{fig:main}
    \vskip -0.3cm
\end{figure*}

\section{Methodology}
In this section, we present our main contributions: efficient E(3)-equivariance (Sec.~\ref{subsec:E3Equivarance}), a Gaussian weighted loss function (Sec.~\ref{subsec:LossFunction}), and receptive-aware inference (Sec.~\ref{subsec:Inference}).

\subsection{Efficient  E(3)-Equivariance}
\label{subsec:E3Equivarance}
To ensure the E(3)-equivariant property, we adopt the frame averaging method defined by the following equation,  
\begin{equation}
\Phi(\mathbf{x}) = \frac{1}{|\mathcal{F}(\mathbf{x})|} \sum_{g \in \mathcal{F}(\mathbf{x})} g \phi(g^{-1}\mathbf{x}),
\label{eq:frame_ave}
\end{equation}

This approach utilizes $\mathcal{F}(\mathbf{x})$, a small subset of the group for averaging, thus making the process more practical for applications involving the Euclidean group E(3). In our context, $\Phi$ represents the E(3)-equivariant normal estimation network.

Although Eq.~\ref{eq:frame_ave} guarantees 
E(3)-equivariance (theoretical proof is presented in the \emph{Appendix}~\ref{appendix:proof}), the training of a network that involves all frames simultaneously, typically requires substantial computational resources and is difficult to optimize. To boost the efficiency, we further propose a random frame strategy as illustrated in Fig.~\ref{fig:main}. During the training stage, a random frame \({\mathcal{F}(\mathbf{x})}_i\) is first chosen from \({\mathcal{F}(\mathbf{x})}\). Then the data is transformed to this frame before being fed into the network. Our random scheme not only assures E(3)-equivariance, but also allows \(\phi\) to handle data in various frames more efficiently. During inference, this transformation can be directly applied to implement \(\Phi\), facilitating the processing of data in its original frame context, thereby achieving E(3)-equivariant property without the need of training the entire $\Phi$.
In summary, previous work~\cite{Puny_2021frameaverage} focused on $\Phi$ on the left side of the equation computing all eight frames jointly, but we focus $\phi$ on the right side computing one $\phi$ at a time. This leads to a factor 8 improvement in efficiency considering the size of the network, while at the same time improving accuracy.

\subsection{Loss Function}
\label{subsec:LossFunction}

In our method, two loss functions are utilized to estimate normals. The first function is defined as a regression loss, which minimizes the Euclidean distance between the predicted normals and the ground truth:
\begin{equation}
    \mathcal{L}^{reg}(\mathbf{n}_i) =  \min \left( \left\| \hat{\mathbf{n}}_i - \mathbf{n}_i \right\|_2^2, \left\| \hat{\mathbf{n}}_i + \mathbf{n}_i \right\|_2^2 \right),
\end{equation}
where $\hat{\mathbf{n}}_i$ and ${\mathbf{n}}_i$ denote the predicted and the ground truth normals for the $i$-th point, respectively.

The second function is the sine loss, focusing on the angular deviation between the predicted normals and the ground truth. It is defined as:
% \begin{equation}
%     \mathcal{L}^{\sin}(\mathbf{n}) = \frac{1}{N} \sum_{i=1}^N \left| \widetilde{\mathbf{n}}_i \times \mathbf{n}_i \right|,
% \end{equation}
\begin{equation}
    \mathcal{L}^{\sin}(\mathbf{n}_i) =  \left\| \hat{\mathbf{n}}_i \times \mathbf{n}_i \right\|_2,
\end{equation}
with $\times$ indicating the cross product. Therefore, the cumulative loss is a blend of these two components:
\begin{equation}
    \mathcal{L}^{Val}(\mathbf{n}) = \frac{1}{N} \sum_{i=1}^N (\mathcal{L}^{reg}(\mathbf{n}_i) + \mathcal{L}^{\sin}(\mathbf{n_i})).
\end{equation}

\iffalse
\begin{wrapfigure}{r}{0.5\textwidth} %
    \centering
    \vskip -0.6cm
    \includegraphics[width=0.9\linewidth]{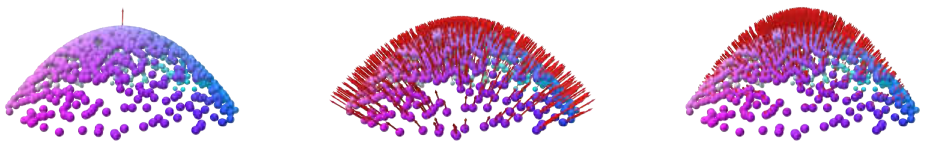}
    \vskip -0.1cm
    \caption{The proposed distance weighting scheme for normal estimation. The length of the red arrows indicates the loss weight for the point. From the left to right are the point-based method, \(\mathcal{L}^{\text{Val}}\), and \(\mathcal{L}^{\text{Gau}}\). 
 }   
    %Loss optimization target diagram, with the length of red arrows indicating the loss weight for the point. From left to right are the point based method, \(\mathcal{L}^{\text{Val}}\), and \(\mathcal{L}^{\text{Gau}}\).

    \label{fig:enter-label3}
    \vskip -0.5cm
\end{wrapfigure}
\fi
In previous normal estimation algorithms~\citep{ben2020deepfit,li2022graphfit,zhu2021adafit}, a single patch is typically used to predict the normal of a single point. These approaches aim to provide more detailed information about that specific point. In contrast, MSECNet~\citep{xiu2023msecnet} utilized a patch to predict normals for all points inside the patch. 
% However, this can be challenging for points on the boundary due to insufficient neighborhood information. To address this issue, especially for points near boundaries, we employ Gaussian distance weighting. This weighting scheme increases penalties for points near the center and decreases them for boundary points, formulated as:
However, this can be challenging for points on the boundary due to insufficient neighborhood information. The receptive field of boundary points is inherently limited, containing information from only one side, which complicates obtaining accurate predictions. To address this issue, especially for points near boundaries, we employ Gaussian distance weighting. This weighting scheme increases penalties for points near the center and decreases them for boundary points:

\begin{equation}
    \mathcal{L}^{{Gau}}(\mathbf{n}) = \frac{1}{N} \sum_{i=1}^N w_i \cdot (\mathcal{L}^{reg}_i + \mathcal{L}^{\sin}_i),
\end{equation}
where $w_i = \exp\left(-\frac{d_i^2}{2\sigma^2}\right)$ is the Gaussian weight for the $i$-th point, $d_i$ is the distance of the $i$-th point from the patch center, and $\sigma$ is the standard deviation of the Gaussian distribution. As shown in Fig.~\ref{fig:enter-label3}, our method optimizes the normals across the entire patch while simultaneously focusing on points with more comprehensive neighborhood information. This scheme
not only improves learning efficiency but also
enhances the estimation accuracy.

\begin{figure}[t]
    \centerline{
    \begin{minipage}{0.48\textwidth}
        \centering
\includegraphics[width=1.0\linewidth]{Fig/loss.pdf}
        % \vskip -0.3cm
        \caption{The proposed distance weighting scheme for normal estimation. The length of the red arrows indicates the loss weight for the point. From the left to right are the point-based method, \(\mathcal{L}^{\text{Val}}\), and \(\mathcal{L}^{\text{Gau}}\). }
        \label{fig:enter-label3}
    \end{minipage}
    
    \hfill
    
    \begin{minipage}{0.48\textwidth}
    \centering
    % \vskip 0.3cm
    \includegraphics[width=1.0\linewidth]{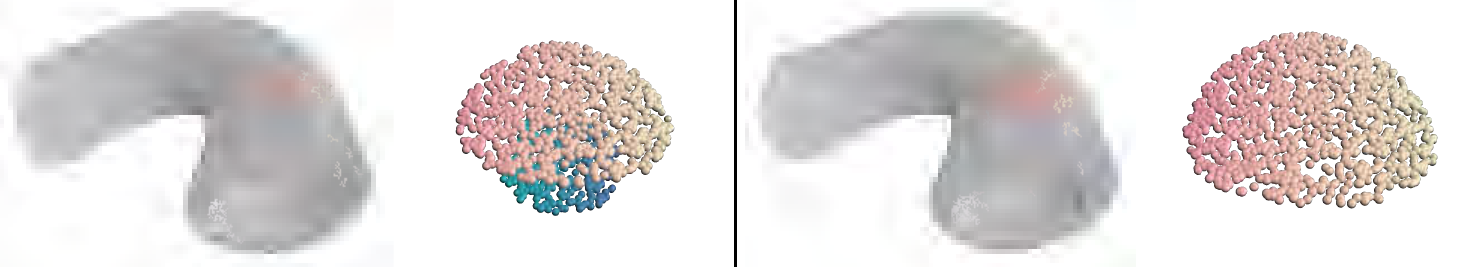}
     \vskip -0.1cm
    \caption{Difference between Euclidean (left) and geodesic patches (right). Euclidean patches contain points from both sides, whereas geodesic patches only have points from one side.}%{(scaled as the definition in Fig.~\ref{fig:teaser})}.}
%The meanings of the numbers in the figure are the same as in the second line of Fig.~\ref{fig:teaser}}
    \label{fig:patch}
    % \vskip -0.5cm
    \end{minipage}
    }
\end{figure}

\subsection{Inference}
\label{subsec:Inference}

\iffalse
\begin{wrapfigure}{r}{0.5\textwidth} %
    \centering
\vskip -0.5cm
\includegraphics[width=0.9\linewidth]{Fig/patch3.pdf}
\vskip -0.3cm
    \caption{Difference between Euclidean (left) and geodesic patches (right). Euclidean patches contain points from both sides, whereas geodesic patches only have points from one side.
%only takes the neighborhood of the point.
}
    \label{fig:patch}
    \vskip -0.5cm
\end{wrapfigure}
\fi
% \begin{figure}[t]
%     \centering
% \vskip -0.2cm
% \includegraphics[width=0.9\linewidth]{Fig/patch3.pdf}
% \vskip -0.3cm
%     \caption{Difference between kNN (left) and geodesic patches (right). kNN takes points on both sides as the same patch, whereas geodesic ensures the shortest route on a surface.
% %only takes the neighborhood of the point.
% }
%     \label{fig:patch}
%     \vskip -0.5cm
% \end{figure}
Previous algorithms have commonly relied on Euclidean $k$-nearest neighbors (kNN) graphs to define patches within point clouds during inference. However, kNN, by its inherent nature, disregards \emph{geodesic} distances and may unintentionally group closeby points from different surfaces into the same patch. As shown in Fig.~\ref{fig:patch}, this can result in inaccuracies and ambiguity, particularly when dealing with complex geometric structures. To overcome this issue, we employ Dijkstra's algorithm in conjunction with a graph constructed from the point cloud to provide an effective approximation for geodesic distances. We generate patches by determining the nearest points through Dijkstra's algorithm on this graph. When the connected components of the point cloud contain fewer points than the desired patch size, our algorithm reverts to the kNN approach for patch construction, which ensures each patch is adequately populated, maintaining robustness even for sparse data.
%in scenarios with sparse data.

After the construction of point cloud patches, we compute \(\phi\) over the entire \({\mathcal{F}(\mathbf{x})}\) of the patches. By averaging these results, we achieve E(3)-equivariant outcomes \(\Phi\). Additionally, we combine proximity-based selection and Gaussian weighting to handle overlapping areas that typically result in multiple normal predictions for each point, \ie,  points nearer to a patch's center are weighted more, reflecting their larger receptive field and points that are excessively distant from the center are discarded. We provide a pseudocode summary in \emph{Appendix~\ref{appendix:pseudo}} to make the above inference process more understandable.

\section{Experimental Results}
\para{Implementation.} 
During training, we set the patch size to $1400$.
We use MSECNet~\citep{xiu2023msecnet} as our backbone, with
patch sizes of 16 and 9 as hyperparameters of the kNN algorithm in the down sampling and up sampling block in the network, and 1024 as the feature fusion dimension. We employ the AdamW~\citep{loshchilov2018decoupled} optimizer for optimization, starting with an initial learning rate of \(2 \times 10^{-3}\) and gradually reducing it based on a cosine annealing schedule until reaching \(2 \times 10^{-5}\). The training lasts for 150 epochs, with a batch size of 128, on four NVIDIA 4090 GPUs. When constructing a graph for geodesic patch construction, we use kNN with 50 neighbors. 

\para{Datasets.}
We utilize the original training and validation split of the PCPNet dataset~\citep{guerrero2018pcpnet} for our training process. This dataset comprises diverse synthetic point clouds, including both uniform sampling and modified versions. The modified versions exhibit different levels of \emph{noise} and \emph{density} variations. Illustrative examples from this dataset are presented in \emph{Appendix~\ref{appendix:dataset}}. 
After training, we conduct tests on the test sets of the PCPNet dataset. To test the generalization capabilities of our method, we also test on the FamousShape dataset~\citep{li2023shs} and the real-world SceneNN dataset~\citep{li2022hsurf}, without any additional fine-tuning. Further details about these datasets can be found in \emph{Appendix~\ref{appendix:dataset}}. It is worth noting that the FamousShape and SceneNN datasets contain significantly more complex shapes compared to the PCPNet dataset. 

\para{Metrics.} Following~\citep{li2023neural}, we use two metrics to evaluate the model's performance: the \emph{Angle Root Mean Squared Error} (RMSE) and the \emph{Percentage of Good Points} (PGP). The formula for RMSE is defined as follows:
\begin{equation}
\text{RMSE}(\mathbf{n}) = \sqrt{\frac{1}{N} \sum_{i=1}^{N} \arccos^2\left( |\mathbf{n}_i \cdot \Hat{\mathbf{n}}_i|\right)}.
\end{equation}
Here, $\mathbf{n} $ and $ \hat{\mathbf{n}}_is$ are the actual and the predicted angles, respectively. $N$ is the total number of points. The PGP metric incorporates a threshold parameter $ \tau $ and is defined as:
\begin{equation}
 \text{PGP}(\tau) = \frac{1}{N} \sum_{i=1}^{N} \mathbb{I}_{\arccos\left( |\mathbf{n}_i \cdot \Hat{\mathbf{n}}_i|\right) < \tau} \times 100\%, 
\end{equation}
where $ \mathbb{I} $ is an indicator function. PGP assesses the accuracy of normal estimation within a certain range.
% PGP measures the percentage of points where the normal estimation error is within an acceptable range.

% Following~\citep{li2023neural}, we employ the \emph{angle Root Mean Squared Error} (RMSE) to evaluate the model's performance and the \emph{Percentage of Good Points} (PGP) to summarize the normal error distribution. 

\begin{table*}[t]
\centering
%\vskip -0.2cm
\caption{RMSE comparisons on the benchmark PCPNet and SceneNN datasets. \textbf{Bold} fonts indicate the top performer and - indicates that the corresponding source code is unavailable.}
% \vskip -0.3cm
%\scalebox{0.6}{
\renewcommand{\arraystretch}{1.1} 
\begin{adjustbox}{width=\textwidth}
\begin{tabular}{c|c|c|c|c|c|c|c|c||c|c|c}
\hline 
\multirow{3}{*}{Method} & \multirow{3}{*}{Year} & \multicolumn{7}{c||}{PCPNet Dataset} & \multicolumn{3}{c}{SceneNN Dataset} \\ \cline{3-12}
& & \multicolumn{4}{c|}{Noise $\sigma$} & \multicolumn{2}{c|}{Density} & \multirow{2}{*}{Average} & \multirow{2}{*}{Clean} & \multirow{2}{*}{Noise} & \multirow{2}{*}{Average} \\ \cline{3-8}
& & None & $0.12 \%$ & $0.6 \%$ & $1.2 \%$ & Stripes & Gradient & & & & \\
\hline PCA~\citep{hoppe1992surfacepca}& 1992 & 12.29 & 12.87 & 18.38 & 27.52 & 13.66 & 12.81 & 16.25 & 15.93 & 16.32 & 16.12 \\
\hline Jet~\citep{cazals2005estimatingnjet}  & 2005 & 12.35 & 12.84 & 18.33 & 27.68 & 13.39 & 13.13 & 16.29 & 15.17 & 15.59 & 15.38 \\
\hline PCPNet~\citep{guerrero2018pcpnet}  & 2018 & 9.64 & 11.51 & 18.27 & 22.84 & 11.73 & 13.46 & 14.58 & 20.86 & 21.40 & 21.13 \\
\hline Zhou \textit{et al.}~\cite{zhou2020normal}*& 2020 & 8.67 & 10.49 & 17.62 & 24.14 & 10.29 & 10.66 & 13.62 & - & - & - \\
\hline Nesti-Net~\citep{ben2019nesti}& 2019 & 7.06 & 10.24 & 17.77 & 22.31 & 8.64 & 8.95 & 12.49 & 13.01 & 15.19 & 14.10 \\
\hline Lenssen \textit{et al.}~\cite{lenssen2020deep}  & 2020 & 6.72 & 9.95 & 17.18 & 21.96 & 7.73 & 7.51 & 11.84 & 10.24 & 13.00 & 11.62 \\
\hline DeepFit~\citep{ben2020deepfit} & 2020 & 6.51 & 9.21 & 16.73 & 23.12 & 7.92 & 7.31 & 11.80 & 10.33 & 13.07 & 11.70 \\
\hline MTRNet~\citep{cao2021latent}* & 2021 & 6.43 & 9.69 & 17.08 & 22.23 & 8.39 & 6.89 & 11.78 & - & - & - \\
\hline Refine-Net~\citep{zhou2022refine}  & 2022 & 5.92 & 9.04 & 16.52 & 22.19 & 7.70 & 7.20 & 11.43 & 18.09 & 19.73 & 18.91 \\
\hline Zhang \textit{et al.}~\cite{zhang2022geometry}  & 2022 & 5.65 & 9.19 & 16.78 & 22.93 & 6.68 & 6.29 & 11.25 & 9.31 & 13.11 & 11.21 \\
\hline AdaFit~\citep{zhu2021adafit}  & 2021 & 5.19 & 9.05 & 16.45 & 21.94 & 6.01 & 5.90 & 10.76 & 8.39 & 12.85 & 10.62 \\
\hline GraphFit~\citep{li2022graphfit}  & 2022 & 4.45 & 8.74 & 16.05 & 21.64 & 5.22 & 5.48 & 10.26 & 7.99  &12.18  &10.09 \\
\hline NeAF~\citep{li2023neaf} & 2023 & 4.20 & 9.25 & 16.35 & 21.74 & 4.89 & 4.88 & 10.22 & 7.88 & 13.20 & 10.54 \\
\hline HSurf-Net~\citep{li2022hsurf}  & 2022 & 4.17 & 8.78 & 16.25 & 21.61 & 4.98 & 4.86 & 10.11 & 7.55 & 12.23 & 9.89 \\
\hline Du \textit{et al.}~\cite{du2023rethinking} &2023& 4.11 & 8.66 & 16.02 & 21.57 & 4.89 & 4.83 & 10.01 &  7.68 & 11.72 & 9.70 \\
\hline NGL~\citep{li2023neural}& 2023 & 4.06 & 8.70 & 16.12 & 21.65 & 4.80 & 4.56 & 9.98 & 7.74 & 12.26 & 10.00 \\
\hline SHS-Net~\citep{li2023shs}&2023 & 3.95 & 8.55 & 16.13 & 21.53 & 4.91 & 4.67 & 9.96 & 7.93 & 12.40 & 10.17 \\ 

% \hline Wu et al. &2023&3.86& \textbf{8.45} & 16.08 & 21.89 & 4.85 & 4.45&9.93\\
\hline MSECNet~\citep{xiu2023msecnet} &2023& 3.84 & 8.74 & 16.10 & 21.05 & 4.34 & 4.51 & 9.76 & 6.94 & 11.66 & 9.30 \\
\hline Ours & - & $\textbf{3.41}$ & $\textbf{8.54}$ & $\textbf{15.72}$ & $\textbf{20.62}$ &$\textbf{ 3.89}$ & $\textbf{4.05}$ & $\textbf{9.37}$ &$\textbf{6.77}$ &$\textbf{11.34}$ & $\textbf{9.05}$\\
\hline
    % \multicolumn{2}{|c|}{Improvement} & 11.20\%$\uparrow$ & 1.07\%$\downarrow$ & 1.87\%$\uparrow$ & 2.04\%$\uparrow$ & 9.90\%$\uparrow$ & 10.20\%$\uparrow$ & 4.00\%$\uparrow$ \\ 
        \multicolumn{2}{c|}{Improvement} & 11.20\%$\uparrow$ & 0.11\%$\uparrow$ & 1.87\%$\uparrow$ & 2.04\%$\uparrow$ & 10.37\%$\uparrow$ & 10.20\%$\uparrow$ & 4.00\%$\uparrow$ & 2.45\%$\uparrow$ & 2.74\%$\uparrow$ & 2.67\%$\uparrow$ \\

\hline
\end{tabular}
%}
\end{adjustbox}
\label{tab:pcpnet}
% \vskip -0.3cm
\end{table*}
%\subsection{Comparisons.}
\subsection{Comparison to Prior Work}

% \begin{figure*}
%     \centering
%     \includegraphics[width=1\linewidth]{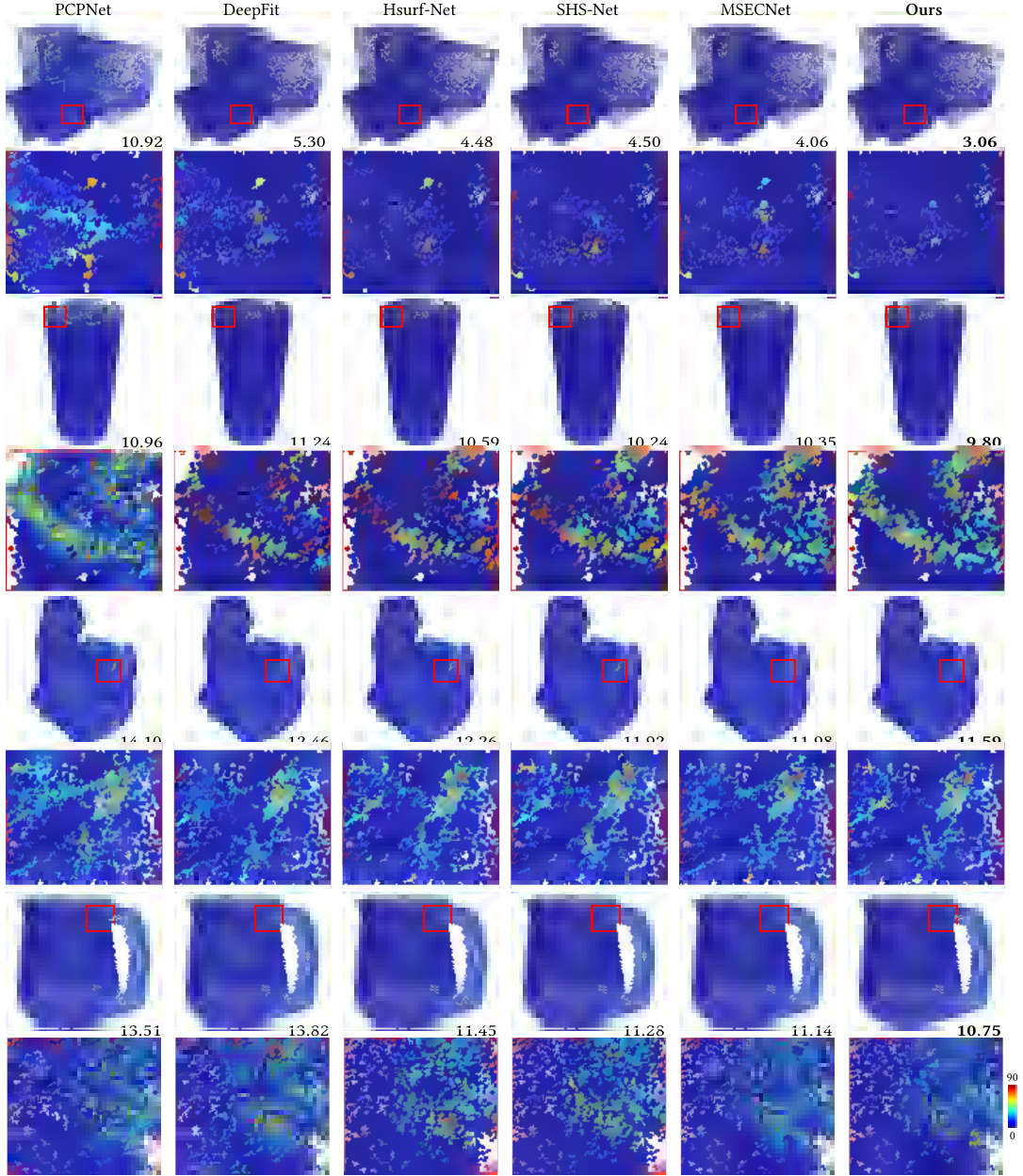}
%         \vskip -0.2cm
%     \caption{Qualitative comparisons of normal estimation on the PCPNet dataset. The values below each model indicate the RMSE deviation.}
%     \label{fig:pcp}
% \end{figure*}

% \begin{figure*}
%     \centering
%     \includegraphics[width=1\linewidth]{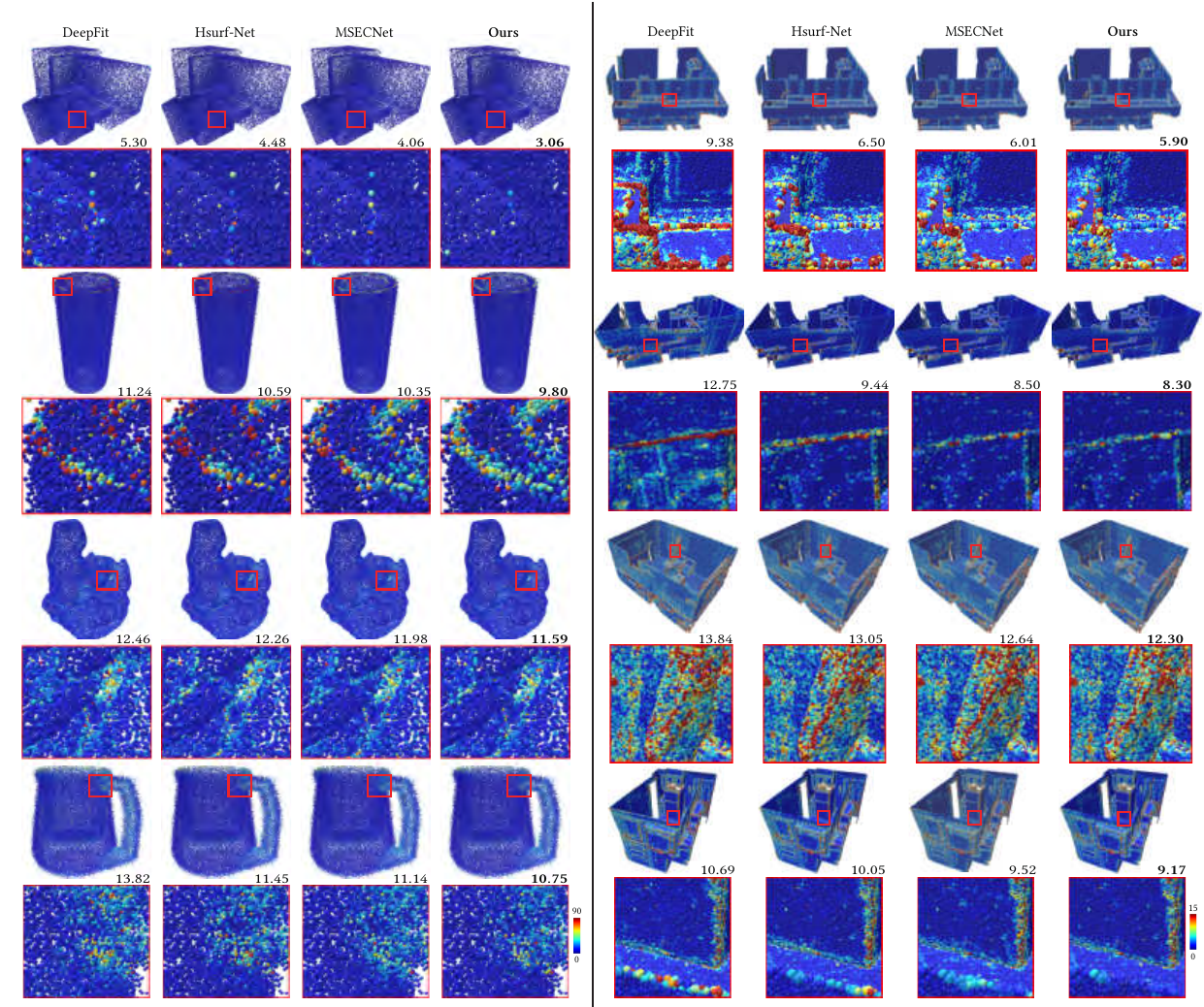}
%         % \vskip -0.3cm
%     \caption{Qualitative comparisons of normal estimation on the PCPNet and SceneNN dataset. The values below each model indicate the RMSE deviation.}
%     \label{fig:scene}
%     \vskip -0.3cm
% \end{figure*}

\begin{figure*}
    \centering
    \includegraphics[width=1\linewidth]{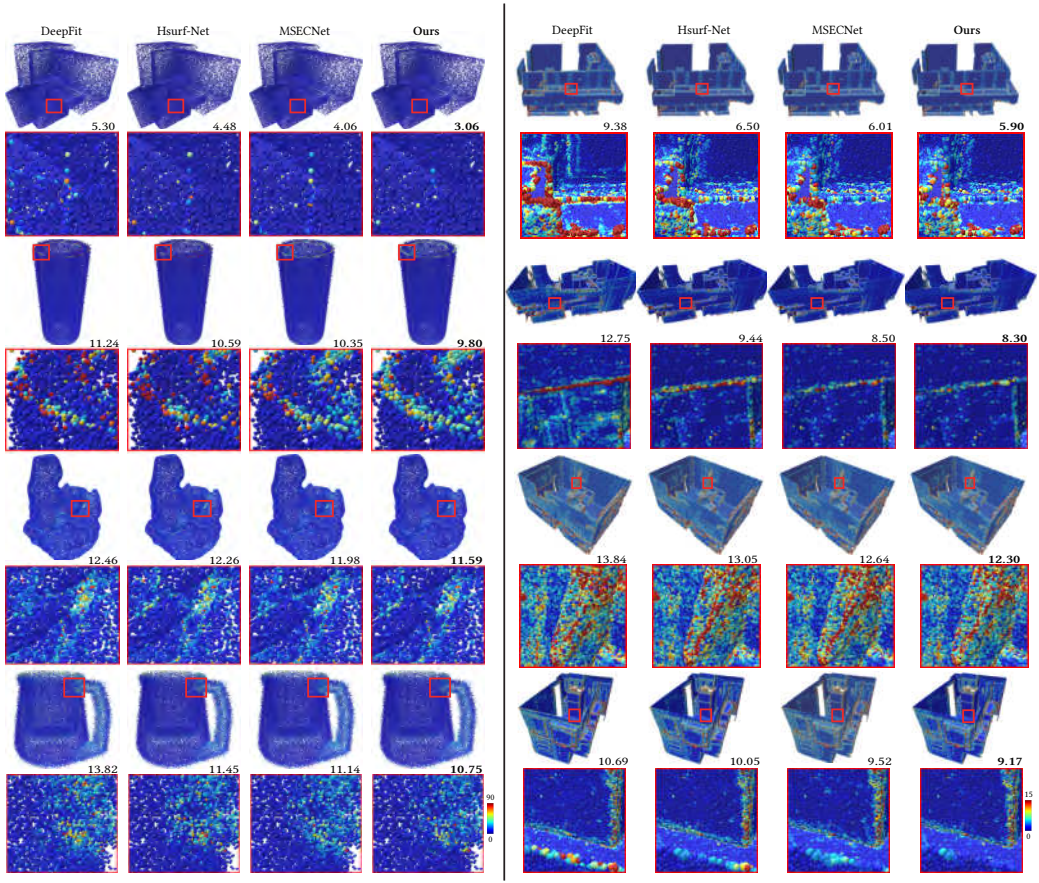}
        % \vskip -0.3cm
    \caption{Qualitative comparisons of normal estimation on the PCPNet and SceneNN dataset. The values below each model indicate the RMSE deviation.}
    \label{fig:scene}
    % \vskip -0.3cm
\end{figure*}

\newcounter{imgcounter}

\newcommand{\imagesort}[1]{\ifcase#1\or 4\or 0\or 1\or 5\or 2\or3\fi} % 
\newcommand{\imagesortn}[1]{\ifcase#1\or 4\or 0\or 1\or 2\or 3\fi} % 
% \begin{figure}

\para{Quantitative and Qualitative Results.} We perform a comprehensive comparative analysis on the PCPNet datasets between our method and various competitors encompassing both traditional and learning-based approaches for normal estimation. As summarized in Table~\ref{tab:pcpnet}, our method surpasses the previously top-performing approach by a significant margin in terms of RMSE.
Under noise-free conditions, we achieve a significant improvement of 11.20\% in RMSE, demonstrating our method's enhanced ability to capture the intrinsic properties of point clouds. For non-uniform sampling, we also achieve over 10\% improvement. Notably, the training data does not include non-uniform sampling data, which shows the generalization ability of our algorithm. Additionally, our method delivers state-of-the-art results in different noise conditions.
On average, we improve RMSE by 4\% on the PCPNet dataset, by 2.67\% on the SceneNN dataset, and 2.44\% on the FamousShape dataset (shown in \emph{Appendix}~\ref{appendix:FamousShape}).
% Concretely, we achieve an impressive 11.20\% improvement under noise-free conditions, indicating its enhanced ability to capture the intrinsic properties of point clouds. This robustness and adaptability are further evident when considering different density conditions. Notably, the error rates under varying density conditions ($3.89$ and $4.05$) are nearly on par with the best previous competitor under noise-free conditions ($3.84$), further highlighting the superiority of our algorithm. Additionally, our method delivers state-of-the-art results across different noise conditions, particularly with a notable 2.04\% improvement in high noise scenarios. In the challenging FamousShape dataset, we also achieve a notable 2.44\% improvement, quantitative comparisons and detailed analysis are reported in \emph{Appendix}. %Table~\ref{tab:famous}. 

Fig.~\ref{fig:pgp} presents the corresponding results using the PGP metric on the PCPNet dataset. The results further validate the significant advantages of our method compared to previous approaches.
We present qualitative comparisons in Fig.~\ref{fig:scene}. From which, it can be seen that our method significantly outperforms previous methods in handling complex details, sharp edges, and boundaries in challenging scenarios.

\para{Timings.} Next, we compare the efficiency of various approaches on the PCPNet dataset. Fig.~\ref{fig:time} highlights the advantages of three different versions of our model, by plotting inference time, parameter count, and accuracy (RMSE) in one figure. \emph{Ours fast} represents a version of $n{_{\mathrm{E}(3)}}=1$ and without GeoPatch. \emph{Ours small} represents a version of $n{_{\mathrm{E}(3)}}=1$, $d_{fused}=256$ and without GeoPatch. \emph{Ours} stands as the default version, utilizing GeoPatch.
It is noteworthy that all three versions excel in terms of accuracy compared to previous methods. The complete version exhibits the most significant improvement in accuracy. Moreover, the fast version displays considerable enhancements in both accuracy and speed. Meanwhile, the small version, despite having fewer parameters than previous methods, still stands out for its remarkable speed and precision.

% \para{Real-World Dataset.} Table~\ref{tab:pcpnet} also reports a notable 2.67\% improvement achieved by our method on the real-world SceneNN dataset. This result not only demonstrates the practical effectiveness of our approach but also highlights its significant enhancements in handling unseen data distributions during training, under both clean and noisy conditions. It underscores the robust generalization capabilities of our method. Overall, our approach outperforms existing methods in addressing complex point cloud challenges in both synthetic and real-world scenarios. 

\begin{table*}[t]
   \centering
   \caption{Ablation studies on the PCPNet dataset. Our default setting achieves the best average result.} %\textbf{Bold} fonts indicate the best setting.}
   % \vskip -0.3cm
   %\scalebox{0.65}{
\begin{adjustbox}{width=\textwidth}
\begin{tabular}{|c|c|c|c|c|c|c|c|c|c|}
\hline 
\multicolumn{2}{|c|}{\multirow{2}{*}{Ablation}} & \multicolumn{4}{c|}{Noise $\sigma$} & \multicolumn{2}{c|}{Density} & \multirow{2}{*}{Average} & \multirow{2}{*}{$\Delta$} \\
\cline{3-8}
\multicolumn{2}{|c|}{} & None & $0.12 \%$ & $0.6 \%$ & $1.2 \%$ & Stripes & Gradient & & \\
% \hline 
% 其余行...
\hline Default & \begin{tabular}{c}
$N=1400$, $d_{f u s e d}=1024$, $n{_{\mathrm{E}(3)}}=8$, w/ $\mathcal{L}_{\text{Gaussian}}$, \\
w/ GeoPatch, w/ HalfPatch, w/ GaussianPatch,\\w/  Random Frame
\end{tabular} &3.41 &8.54 & 15.72& 20.62 & 3.89&4.05&9.37& - \\
\hline
\multirow{2}{*}{ (a) } 

& w/  Random Frame, $d_{f u s e d}=128$ & 3.75&8.56& 15.85& 20.74& 4.42&4.50& 9.64 &-0.27\\
\cline{2-10}
& w/o  Random Frame, $d_{f u s e d}=128$   & 4.49 & 8.71 & 16.04 &  20.91   &  5.28   &  5.25   &  10.11&-0.74     \\
\cline{2-10}
\hline
\multirow{3}{*}{ (b) } & $n{_{\mathrm{E}(3)}}=4$   &  3.41   & 8.56 & 15.71 &  20.64  &  3.93   &  4.09   &  9.39& -0.02   \\
\cline{2-10}
&  $n{_{\mathrm{E}(3)}}=2$  & 3.48& 8.56&15.75& 20.64&3.94& 4.11& 9.41 &-0.04\\
\cline{2-10}
& $n{_{\mathrm{E}(3)}}=1$   & 3.51&8.61&15.78&20.68& 3.97&4.16& 9.45&-0.08 \\
% \cline{2-10}
% 3.51 8.61 15.78 20.68 3.97 4.16 9.45
%  3.48  8.56  15.75    20.64  3.94  4.11 9.41
% 3.41 8.56 15.71 20.64 3.93 4.09 9.39
% &  $n{_{\mathrm{E}(3)}}=1$  & 3.49&8.59&15.84& 20.68& 4.06&4.15& 9.48&-0.11 \\
\hline
\multirow{2}{*}{ (c) } 
& w/  $\mathcal{L}^{Val}$    & 3.48 & {8.50} & 15.78 &  20.64   &  4.02   &  4.20   &  9.44&-0.07     \\
\cline{2-10}
& w/  $\mathcal{L}^{Half}$   &3.52 & {8.50} & {15.68} &  20.69   &  4.33   &  4.15   &  9.48 &-0.11\\
% \cline{2-10}

% [3.993204, 8.560199, 15.849955, 20.681103, 4.7082777, 4.6945963] | Mean: 9.747889518737793
% & w/o GeoPatch, $n{_{E(3)}}=2$   & 3.45&8.53&15.81&20.63& 4.05&4.17& 9.44&-0.07 \\
% \cline{2-10}
% & w/o GeoPatch, $n{_{E(3)}}=1$  & 3.49&8.59&15.84& 20.68& 4.06&4.15& 9.48&-0.11 \\
\hline
% \multirow{2}{*}{ (b) } 
% & w/  $\mathcal{L}^{vallina}$    & 3.48 & 8.50 & 15.78 &  20.64   &  4.02   &  4.20   &  9.44&-0.07     \\
% \cline{2-10}
% & w/  $\mathcal{L}^{Half}$   &3.52 & 8.50 & 15.68 &  20.69   &  4.33   &  4.15   &  9.48 &-0.11\\
% \cline{2-10}
% & w/o GeoPatch, $n{_{E(3)}}=2$   & 3.45&8.53&15.81&20.63& 4.05&4.17& 9.44&-0.07 \\
% \cline{2-10}
% & w/o GeoPatch, $n{_{E(3)}}=1$  & 3.49&8.59&15.84& 20.68& 4.06&4.15& 9.48&-0.11 \\
% \hline
 % [3.425510945405495, 8.549816223598391, 15.727541087036977, 20.63333662609752, 3.909316734256253, 4.071198119283575] | Mean: 9.386119955946368
  % [3.4998731770994986, 8.661598578854242, 15.900277855649401, 20.8149188717001, 4.064099014540125, 4.1557234797696765] | Mean: 9.516081829602173
\multirow{3}{*}{ (d) } 
& w/o GeoPatch &  3.45   & 8.52 & 15.74 &  {20.61}   &  3.96   &  4.12   &  9.40& -0.03   \\
\cline{2-10}
& w/o HalfPatch &  3.50   & 8.66 & 15.90 &  {20.81}   &  4.06   &  4.16   &  9.52  &-0.15\\
\cline{2-10}
& w/o GaussianPatch&  3.43   & 8.55 & 15.73 &  20.63   &  3.91  &  4.07   &  9.39& -0.02\\
\hline
\multirow{2}{*}{ (e) } 
&$N=700$  & {3.37}& 8.53& 15.74&20.96&{3.87}& {3.98}& 9.41&-0.04     \\
\cline{2-10}
& $N=2100$  & 3.47&8.56&15.79&20.63&3.98&4.09& 9.42 &-0.05\\
% \cline{2-10}
% & w/o GeoPatch, $n{_{E(3)}}=2$   & 3.45&8.53&15.81&20.63& 4.05&4.17& 9.44&-0.07 \\
% 3.51 8.61 15.78 20.68 3.97 4.16 9.45
%  3.48  8.56  15.75    20.64  3.94  4.11 9.41
% 3.41 8.56 15.71 20.64 3.93 4.09 9.39
% \cline{2-10}
% & w/o GeoPatch, $n{_{E(3)}}=1$  & 3.49&8.59&15.84& 20.68& 4.06&4.15& 9.48&-0.11 \\
\hline
\multirow{2}{*}{ (f) } 
& $d_{f u s e d}=512$  &  3.50&8.49&15.70&20.70& 4.10& 4.21&9.45 &-0.08     \\
\cline{2-10}
& $d_{f u s e d}=256$ & 3.66&8.57& 15.71& 20.70& 4.30&4.44& 9.56 &-0.19\\

 % [3.7461229884228837, 8.559256070244396, 15.850010930303016, 20.73661645842713, 4.422496973847584, 4.501808715741342] | Mean: 9.636052022831059

%  /data/yssong/MSECNETor/scripts/results_ssss/2_4/last/pcpnet 
%  All RMS not oriented (shape average): [3.4122499270365965, 8.558225647453135, 15.712149780316993, 20.640221983762526, 3.92803344324901, 4.08401738546758] | Mean: 9.389149694547639

%  /data/yssong/MSECNETor/scripts/results_ssss/2_1/last/pcpnet 
%  All RMS not oriented (shape average): [3.5172733549515924, 8.607600408927581, 15.775014963791905, 20.67714028664563, 3.968486417754325, 4.158375771069392] | Mean: 9.450648533856738

% % 3.51 8.61 15.78 20.68 3.97 4.16 9.45
% %  3.48  8.56  15.75    20.64  3.94  4.11 9.41
% 3.41 8.56 15.71 20.64 3.93 4.09 9.39

% : [4.4880677099782, 8.708820438292724, 16.03667932139791, 20.91447123337042, 5.278559869551497, 5.2541192492562985] | Mean: 10.113452970307842

%  [3.7461229884228837, 8.559256070244396, 15.850010930303016, 20.73661645842713, 4.422496973847584, 4.501808715741342] | Mean: 9.636052022831059

% \cline{2-10}
% & w/o GeoPatch, $n{_{E(3)}}=2$   & 3.45&8.53&15.81&20.63& 4.05&4.17& 9.44&-0.07 \\
% \cline{2-10}
% & w/o GeoPatch, $n{_{E(3)}}=1$  & 3.49&8.59&15.84& 20.68& 4.06&4.15& 9.48&-0.11 \\
\hline

    % \hline Ours  dimfnn512  &  3.50&8.49&15.70&20.70& 4.10& 4.21&9.45  \\
    %  Ours  dimfnn256  & 3.66&8.57& 15.71& 20.70& 4.30&4.44& 9.56\\
    % \hline OurstestPATCHSIZE700  & 3.37& 8.53& 15.74&20.96&3.87& 3.98& 9.41\\
    % OurstestPATCHSIZE2100 &  3.47&8.56&15.79&20.63&3.98&4.09& 9.42\\

\end{tabular}
\end{adjustbox}
%}
\label{tab:ablatioin}
% \vskip -0.5cm
\end{table*}

\begin{figure*}[t]
    \centering
    \includegraphics[width=1\linewidth]{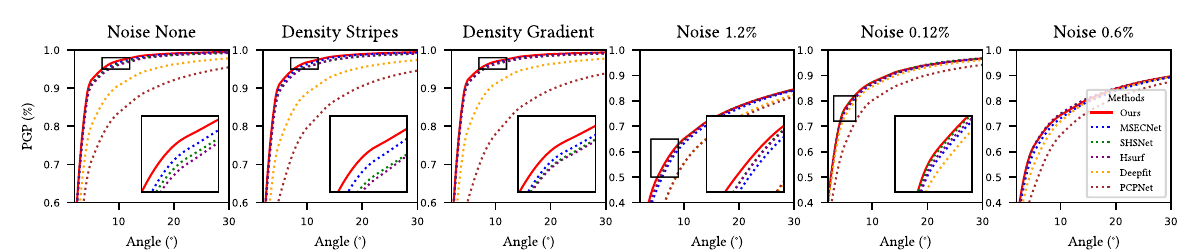}
    % \vskip -0.4cm
    \caption{Statistics of the PGP results on the PCPNet dataset. }
    \label{fig:pgp}
    % \vskip -0.3cm
\end{figure*}
% \begin{figure}[h]
%     \centering
%     % \vskip -0.3cm
%     \includegraphics[width=0.5\linewidth]{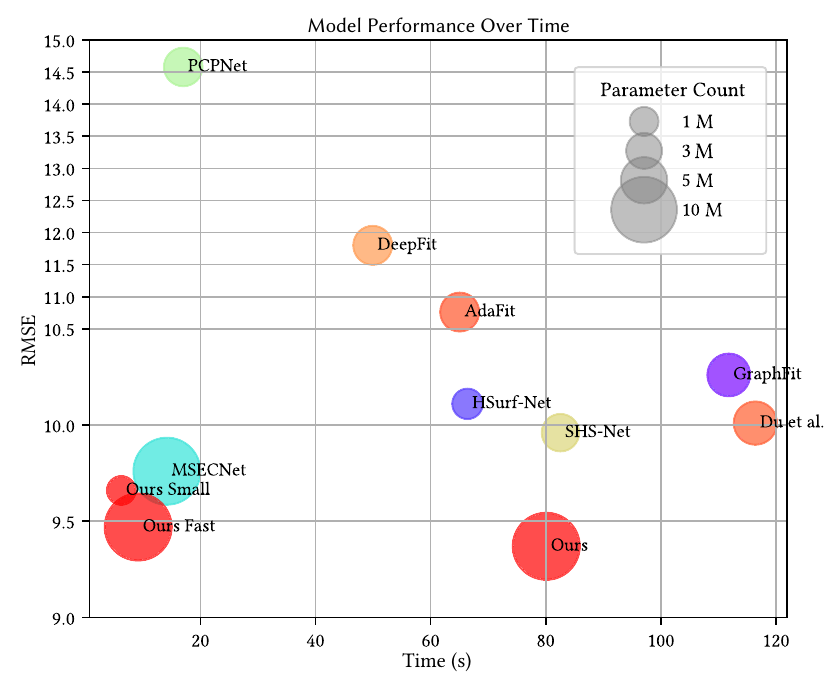}
%     \vskip -0.3cm
%     \caption{Comparative analysis of runtime and angle RMSE error across different methods with circle size indicating the model parameters.}
%     \label{fig:time}
%     \vskip -0.3cm
% \end{figure}
\begin{figure}[!bp]
    \centerline{
    \begin{minipage}{0.4\textwidth}
        \centering
        \includegraphics[height=1.8in]{Fig/time_time1.pdf}
        % \vskip -0.3cm
        \caption{Comparative analysis of runtime and angle RMSE error across different methods with circle size indicating the model parameters.}
        \label{fig:time}
    \end{minipage}
    
    \hfill
    
    \begin{minipage}{0.54\textwidth}
    \centering
    \vskip -0.35cm
    \includegraphics[height=1.8in]{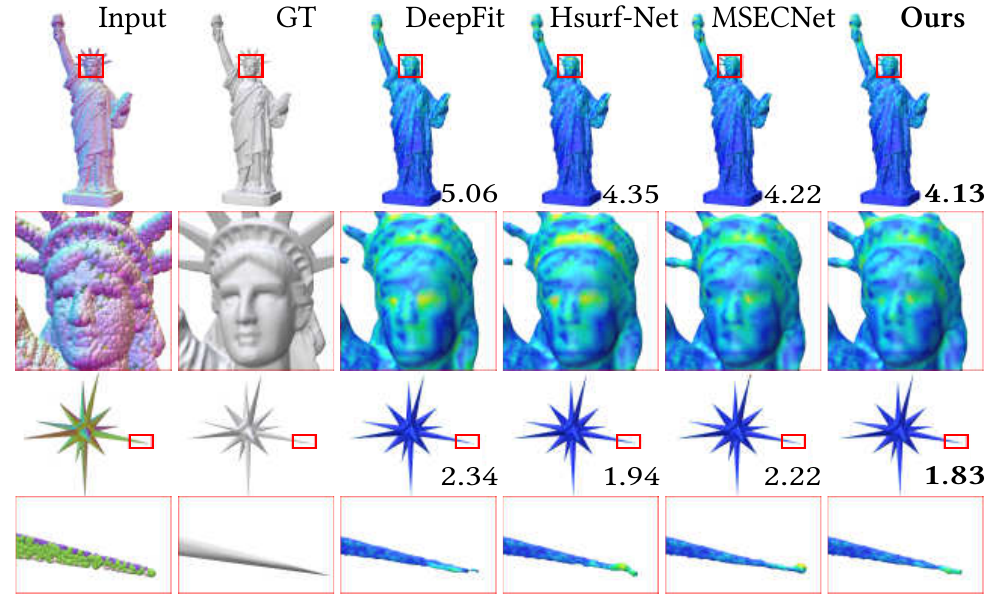}
     % \vskip -0.3cm
    \caption{Comparisons of surface reconstruction. The values are the Hausdorff distance to the ground truth surface.}%{(scaled as the definition in Fig.~\ref{fig:teaser})}.}
%The meanings of the numbers in the figure are the same as in the second line of Fig.~\ref{fig:teaser}}
    \label{fig:recon}
    % \vskip -0.5cm
    \end{minipage}
    }
\end{figure}

\subsection{Ablation Studies}
In this part, we perform ablation studies to validate various components of our model and assess their impact on performance.

% \begin{figure}[!htbp]
%     \centering
%     % \vskip 0.3cm
%     \includegraphics[width=1\linewidth]{Fig/mesh1.pdf}
%      \vskip -0.3cm
%     \caption{Comparisons of surface reconstruction. The values are the Hausdorff distance to the ground truth surface {(scaled as the definition in Fig.~\ref{fig:teaser})}.}
% %The meanings of the numbers in the figure are the same as in the second line of Fig.~\ref{fig:teaser}}
%     \label{fig:recon}
%     \vskip -0.5cm
% \end{figure}
\para{(a) Random Frame Strategy.} 
To demonstrate the efficacy of our algorithm's random strategy from Sec.~\ref{subsec:E3Equivarance}, we utilize a configuration with setting feature fusion dimension ($d_{f u s e d}$) 128, and compare it against Puny \textit{et al.}~\cite{Puny_2021frameaverage}, which does not employ the random strategy. With the same parameter settings, replacing the random strategy with Puny \textit{et al.}~\cite{Puny_2021frameaverage}  requires eight times more training resources and results in 3.8\% worse RMSE. With the same training resources, compared to our complete model, Puny \textit{et al.}~\cite{Puny_2021frameaverage} shows an accuracy decrease of 7.9\%. These results clearly illustrate that our Random Strategy significantly surpasses Puny \textit{et al.}~\cite{Puny_2021frameaverage} in terms of both accuracy and efficiency.

\para{(b) Frame Element Selection.} 
As reported in Table~\ref{tab:ablatioin}, experiments reveal that using a larger number of randomly selected elements from the frame set during inference %(without using GeoPatch)
leads to improved performance. This finding implies the significance of E(3) in enhancing the efficacy of our model.

\para{(c) Loss Function Variation.} 
To verify the superiority of our loss function, we introduce a new function, $\mathcal{L}^{Half}$, which focuses the optimization on the nearest half of the points in a patch. $\mathcal{L}^{Val}$ means equal optimization across all points. While $\mathcal{L}^{Half}$ shows advancements in scenarios with medium noise, it is less effective compared to $\mathcal{L}^{Val}$, particularly with a 0.31 decrease in Stripes. This limitation is attributed to $\mathcal{L}^{Half}$'s limited utilization of information from unevenly sampled patches. This contrast emphasizes the strength of our Gaussian weighting approach, which optimizes neighborhood information more effectively.

\para{(d) Configuration Removal.} We remove key configurations including GeoPatch (geodesic patching), Half Sampling (discarded boundary points), and GaussianPatch (Gaussian weighted sampling) to evaluate their impacts. The results presented in Table~\ref{tab:ablatioin} confirm the effectiveness of each component in our inference strategy.

\para{(e) Patch Size.} We also evaluate various patch sizes. Results reveal that the optimal patch size for efficient and effective noise and density handling is 1400.

\para{(f) Feature Fusion Dimension Adjustment.} By altering $d_{f u s e d}$, we observe a clear relationship between the number of parameters and model performance. This experiment highlights the impact of the feature fusion dimension  on the overall performance of the model.

\subsection{Applications}
We demonstrate the effectiveness of E$^3$-Net via Poisson surface reconstruction~\citep{kazhdan2013screened} (Fig.~\ref{fig:recon}). As observed in Fig.~\ref{fig:recon}, our approach excels in challenging regions, including model tips and intersections, showcasing its ability to capture intricate surface structures. We also demonstrate the versatility of our proposed framework by applying random frame and frame averaging techniques to \emph{point cloud denoising} (please refer to the \emph{Appendix}~\ref{appendix:denoise} for more details).

\section{Conclusions, Limitations, and Future Work} We presented a novel method that guarantees E(3)-equivariance for normal estimation in point clouds. Our approach effectively balances high-quality results and efficiency by combining random frame training with average frame inference. Moreover, we leverage a Gaussian weighted loss, geodesic patches, and receptive-aware weighting to enhance local feature aggregation, resulting in impressive performance. Our method outperforms competitors by a remarkable margin on three datasets and demonstrates the capability to address practical applications.

While \emph{Ours small} version stands out as the fastest, smallest, and most accurate among all previous algorithms, it still exhibits some limitations compared to our complete version. %Besides, our method does not consider the intrinsic noise present in point clouds. This suggests potential avenues for improvements such as incorporating denoising processes to further enhance algorithmic accuracy. 
Although our method achieves state-of-the-art results on real-world data, it only relies on synthetic data for training. In the future, constructing larger real-world datasets with various scanning noises and developing better simulators for real-world scanners could be beneficial. By training on better synthetic data and fine-tuning on real data, the performance could be further enhanced.
Additionally, the proposed method is also adaptable for various patch-based point cloud analysis tasks, such as point cloud resampling, point cloud completion, and curvature estimation. %Additionally, the method we proposed can be applied to any patch-based point cloud tasks such as point cloud resampling and point cloud completion. %Future work could explore the integration of our method with advanced point cloud processing techniques.

\para{Broader Impact.} 
Our algorithm addresses a fundamental problem in geometry processing, making it applicable to various domains such as point cloud denoising and surface reconstruction. As a low-level building block, our work has no direct negative societal impacts.

% Our algorithm focuses on a fundamental problem in geometric processing and thus has various applications ranging from point cloud denosing to surface reconstruction, promoting impressive progress in this area. As a low-level building block, our work has no direct negative outcome, other than what could arise from the aforementioned applications.

% \newpage
% \bibliographystyle{ACM-Reference-Format}
% \bibliography{sample-base}
{\small
	\bibliographystyle{unsrt}
	\bibliography{neurips_2024.bib}

\begin{thebibliography}{10}

\bibitem{kazhdan2006poisson}
Michael Kazhdan, Matthew Bolitho, and Hugues Hoppe.
\newblock Poisson surface reconstruction.
\newblock In {\em Symp. Geom. Process.}, volume~7, pages 61--70, 2006.

\bibitem{kazhdan2013screened}
Michael Kazhdan and Hugues Hoppe.
\newblock Screened {P}oisson surface reconstruction.
\newblock {\em ACM Trans. Graph.}, 32(3):1--13, 2013.

\bibitem{drost2010model}
Bertram Drost, Markus Ulrich, Nassir Navab, and Slobodan Ilic.
\newblock Model globally, match locally: Efficient and robust 3d object recognition.
\newblock In {\em Proc. IEEE Conf. Comput. Vis. Pattern Recog.}, pages 998--1005, 2010.

\bibitem{brostow2008segmentation}
Gabriel~J Brostow, Jamie Shotton, Julien Fauqueur, and Roberto Cipolla.
\newblock Segmentation and recognition using structure from motion point clouds.
\newblock In {\em Proc. Eur. Conf. Comput. Vis.}, pages 44--57, 2008.

\bibitem{Fleishman_2003}
Shachar Fleishman, Iddo Drori, and Daniel Cohen-Or.
\newblock Bilateral mesh denoising.
\newblock {\em ACM Trans. Graph.}, 22(3):950–953, 2003.

\bibitem{Avron_2010}
Haim Avron, Andrei Sharf, Chen Greif, and Daniel Cohen-Or.
\newblock l1-sparse reconstruction of sharp point set surfaces.
\newblock {\em ACM Trans. Graph.}, 29(5):1--12, 2010.

\bibitem{hoppe1992surfacepca}
Hugues Hoppe, Tony DeRose, Tom Duchamp, John McDonald, and Werner Stuetzle.
\newblock Surface reconstruction from unorganized points.
\newblock In {\em Proc. ACM {SIGGRAPH}}, pages 71--78, 1992.

\bibitem{cazals2005estimatingnjet}
Fr{\'e}d{\'e}ric Cazals and Marc Pouget.
\newblock Estimating differential quantities using polynomial fitting of osculating jets.
\newblock {\em Comput. Aided Geom. Des.}, 22(2):121--146, 2005.

\bibitem{levin1998approximationmls}
David Levin.
\newblock The approximation power of moving least-squares.
\newblock {\em Math. Comput.}, 67(224):1517--1531, 1998.

\bibitem{mitra2003estimating}
Niloy~J Mitra, An~Nguyen, and Leonidas Guibas.
\newblock Estimating surface normals in noisy point cloud data.
\newblock {\em Int. J. Comput. Geom. Appl.}, 14(4), 2004.

\bibitem{merigot2010voronoi}
Quentin M{\'e}rigot, Maks Ovsjanikov, and Leonidas~J Guibas.
\newblock {Voronoi}-based curvature and feature estimation from point clouds.
\newblock {\em IEEE Trans. Vis. Comput. Graph.}, 17(6):743--756, 2010.

\bibitem{boulch2012fast}
Alexandre Boulch and Renaud Marlet.
\newblock Fast and robust normal estimation for point clouds with sharp features.
\newblock {\em Comput. Graph. Forum}, 31(5):1765--1774, 2012.

\bibitem{xu2023globally}
Rui Xu, Zhiyang Dou, Ningna Wang, Shiqing Xin, Shuangmin Chen, Mingyan Jiang, Xiaohu Guo, Wenping Wang, and Changhe Tu.
\newblock Globally consistent normal orientation for point clouds by regularizing the winding-number field.
\newblock {\em ACM Trans. Graph.}, 42(4):1--15, 2023.

\bibitem{guerrero2018pcpnet}
Paul Guerrero, Yanir Kleiman, Maks Ovsjanikov, and Niloy~J Mitra.
\newblock {PCPNet}: learning local shape properties from raw point clouds.
\newblock {\em Comput. Graph. Forum}, 37(2):75--85, 2018.

\bibitem{ben2020deepfit}
Yizhak Ben-Shabat and Stephen Gould.
\newblock {DeepFit}: {3D} surface fitting via neural network weighted least squares.
\newblock In {\em Proc. Eur. Conf. Comput. Vis.}, pages 20--34, 2020.

\bibitem{zhu2021adafit}
Runsong Zhu, Yuan Liu, Zhen Dong, Yuan Wang, Tengping Jiang, Wenping Wang, and Bisheng Yang.
\newblock {AdaFit}: Rethinking learning-based normal estimation on point clouds.
\newblock In {\em Proc. IEEE Int. Conf. Comput. Vis.}, pages 6118--6127, 2021.

\bibitem{li2022graphfit}
Keqiang Li, Mingyang Zhao, Huaiyu Wu, Dong-Ming Yan, Zhen Shen, Fei-Yue Wang, and Gang Xiong.
\newblock {GraphFit}: Learning multi-scale graph-convolutional representation for point cloud normal estimation.
\newblock In {\em Proc. Eur. Conf. Comput. Vis.}, pages 651--667, 2022.

\bibitem{li2022hsurf}
Qing Li, Yu-Shen Liu, Jin-San Cheng, Cheng Wang, Yi~Fang, and Zhizhong Han.
\newblock {HSurf-Net}: Normal estimation for {3D} point clouds by learning hyper surfaces.
\newblock {\em Proc. Int. Conf. Neural Inf. Process. Syst.}, 35:4218--4230, 2022.

\bibitem{li2023shs}
Qing Li, Huifang Feng, Kanle Shi, Yue Gao, Yi~Fang, Yu-Shen Liu, and Zhizhong Han.
\newblock {SHS-Net}: Learning signed hyper surfaces for oriented normal estimation of point clouds.
\newblock In {\em Proc. IEEE Conf. Comput. Vis. Pattern Recog.}, pages 13591--13600, 2023.

\bibitem{xiu2023msecnet}
Haoyi Xiu, Xin Liu, Weimin Wang, Kyoung-Sook Kim, and Masashi Matsuoka.
\newblock {MSECNet}: Accurate and robust normal estimation for {3D} point clouds by multi-scale edge conditioning.
\newblock In {\em Proc. ACM Int. Conf. Multimedia.}, pages 2535--2543, 2023.

\bibitem{Scarselli_2009gnn}
Franco Scarselli, Marco Gori, Ah~Chung Tsoi, Markus Hagenbuchner, and Gabriele Monfardini.
\newblock The graph neural network model.
\newblock {\em IEEE Trans. Neural Networks}, 20(1):61--80, 2008.

\bibitem{Charles_2017pointnet}
Charles~R Qi, Hao Su, Kaichun Mo, and Leonidas~J Guibas.
\newblock {PointNet}: Deep learning on point sets for {3D} classification and segmentation.
\newblock In {\em Proc. IEEE Conf. Comput. Vis. Pattern Recog.}, pages 652--660, 2017.

\bibitem{Puny_2021frameaverage}
Omri Puny, Matan Atzmon, Edward~J Smith, Ishan Misra, Aditya Grover, Heli Ben-Hamu, and Yaron Lipman.
\newblock {Frame} averaging for invariant and equivariant network design.
\newblock In {\em Proc. Int. Conf. Learning Representations.}, 2021.

\bibitem{satorras2021nEngnn}
V{\i}ctor~Garcia Satorras, Emiel Hoogeboom, and Max Welling.
\newblock {E}(n) equivariant graph neural networks.
\newblock In {\em Proc. Int. Conf. Mach. Learning.}, pages 9323--9332, 2021.

\bibitem{buades2005review}
Antoni Buades, Bartomeu Coll, and Jean-Michel Morel.
\newblock A review of image denoising algorithms, with a new one.
\newblock {\em Multiscale Model. Simul.}, 4(2):490--530, 2005.

\bibitem{zhou2020normal}
Jun Zhou, Hua Huang, Bin Liu, and Xiuping Liu.
\newblock Normal estimation for {3D} point clouds via local plane constraint and multi-scale selection.
\newblock {\em Comput. Aided Des.}, 129:102916, 2020.

\bibitem{cao2021latent}
Junjie Cao, Hairui Zhu, Yunpeng Bai, Jun Zhou, Jinshan Pan, and Zhixun Su.
\newblock Latent tangent space representation for normal estimation.
\newblock {\em IEEE Trans. Ind. Electron.}, 69(1):921--929, 2021.

\bibitem{ben2019nesti}
Yizhak Ben-Shabat, Michael Lindenbaum, and Anath Fischer.
\newblock {Nesti-Net}: Normal estimation for unstructured {3D} point clouds using convolutional neural networks.
\newblock In {\em Proc. IEEE Conf. Comput. Vis. Pattern Recog.}, pages 10112--10120, 2019.

\bibitem{lenssen2020deep}
Jan~Eric Lenssen, Christian Osendorfer, and Jonathan Masci.
\newblock Deep iterative surface normal estimation.
\newblock In {\em Proc. IEEE Conf. Comput. Vis. Pattern Recog.}, pages 11247--11256, 2020.

\bibitem{zhou2022refine}
Haoran Zhou, Honghua Chen, Yingkui Zhang, Mingqiang Wei, Haoran Xie, Jun Wang, Tong Lu, Jing Qin, and Xiao-Ping Zhang.
\newblock {Refine-net}: Normal refinement neural network for noisy point clouds.
\newblock {\em IEEE Trans. Pattern Anal. Mach. Intell.}, 45(1):946--963, 2022.

\bibitem{zhang2022geometry}
Jie Zhang, Jun-Jie Cao, Hai-Rui Zhu, Dong-Ming Yan, and Xiu-Ping Liu.
\newblock Geometry guided deep surface normal estimation.
\newblock {\em Comput. Aided Des.}, 142:103119, 2022.

\bibitem{du2023rethinking}
Hang Du, Xuejun Yan, Jingjing Wang, Di~Xie, and Shiliang Pu.
\newblock Rethinking the approximation error in {3D} surface fitting for point cloud normal estimation.
\newblock In {\em Proc. IEEE Conf. Comput. Vis. Pattern Recog.}, pages 9486--9495, 2023.

\bibitem{li2023neaf}
Shujuan Li, Junsheng Zhou, Baorui Ma, Yu-Shen Liu, and Zhizhong Han.
\newblock Neaf: Learning neural angle fields for point normal estimation.
\newblock In {\em Proc. AAAI Conf. Artif. Intell.}, volume~37, pages 1396--1404, 2023.

\bibitem{li2023neural}
Qing Li, Huifang Feng, Kanle Shi, Yi~Fang, Yu-Shen Liu, and Zhizhong Han.
\newblock Neural gradient learning and optimization for oriented point normal estimation.
\newblock In {\em SIGGRAPH Asia Conf. Pap.}, pages 122:1--122:9, 2023.

\bibitem{deng2021vector}
Congyue Deng, Or~Litany, Yueqi Duan, Adrien Poulenard, Andrea Tagliasacchi, and Leonidas~J Guibas.
\newblock Vector neurons: A general framework for so (3)-equivariant networks.
\newblock In {\em Proc. IEEE Int. Conf. Comput. Vis.}, pages 12200--12209, 2021.

\bibitem{li2021rotation}
Xianzhi Li, Ruihui Li, Guangyong Chen, Chi-Wing Fu, Daniel Cohen-Or, and Pheng-Ann Heng.
\newblock A rotation-invariant framework for deep point cloud analysis.
\newblock {\em IEEE Trans. Vis. Comput. Graph.}, 28(12):4503--4514, 2021.

\bibitem{chen2021equivariant}
Haiwei Chen, Shichen Liu, Weikai Chen, Hao Li, and Randall Hill.
\newblock Equivariant point network for 3d point cloud analysis.
\newblock In {\em Proc. IEEE Conf. Comput. Vis. Pattern Recog.}, pages 14514--14523, 2021.

\bibitem{assaad2022vn}
Serge Assaad, Carlton Downey, Rami Al-Rfou, Nigamaa Nayakanti, and Ben Sapp.
\newblock Vn-transformer: Rotation-equivariant attention for vector neurons.
\newblock {\em arXiv preprint arXiv:2206.04176}, 2022.

\bibitem{esteves2018learning}
Carlos Esteves, Christine Allen-Blanchette, Ameesh Makadia, and Kostas Daniilidis.
\newblock Learning so (3) equivariant representations with spherical cnns.
\newblock In {\em Proc. Eur. Conf. Comput. Vis.}, pages 52--68, 2018.

\bibitem{schutt2018schnet}
Kristof~T Sch{\"u}tt, Huziel~E Sauceda, P-J Kindermans, Alexandre Tkatchenko, and K-R M{\"u}ller.
\newblock {Schnet}--a deep learning architecture for molecules and materials.
\newblock {\em J. Chem. Phys.}, 148(24), 2018.

\bibitem{thomas2018tensor}
Nathaniel Thomas, Tess Smidt, Steven Kearnes, Lusann Yang, Li~Li, Kai Kohlhoff, and Patrick Riley.
\newblock Tensor field networks: Rotation-and translation-equivariant neural networks for {3D} point clouds.
\newblock {\em arXiv preprint arXiv:1802.08219}, 2018.

\bibitem{kohler2020equivariant}
Jonas K{\"o}hler, Leon Klein, and Frank No{\'e}.
\newblock Equivariant flows: exact likelihood generative learning for symmetric densities.
\newblock In {\em Proc. Int. Conf. Mach. Learning.}, pages 5361--5370. PMLR, 2020.

\bibitem{fuchs2020se}
Fabian Fuchs, Daniel Worrall, Volker Fischer, and Max Welling.
\newblock {SE(3)}-transformers: {3D} roto-translation equivariant attention networks.
\newblock {\em Proc. Int. Conf. Neural Inf. Process. Syst.}, 33:1970--1981, 2020.

\bibitem{atzmon2022frame}
Matan Atzmon, Koki Nagano, Sanja Fidler, Sameh Khamis, and Yaron Lipman.
\newblock {Frame} averaging for equivariant shape space learning.
\newblock In {\em Proc. IEEE Conf. Comput. Vis. Pattern Recog.}, pages 631--641, 2022.

\bibitem{li2023e3sym}
Ren-Wu Li, Ling-Xiao Zhang, Chunpeng Li, Yu-Kun Lai, and Lin Gao.
\newblock {E3Sym}: {Leveraging} {E(3)} invariance for unsupervised {3D} planar reflective symmetry detection.
\newblock In {\em Proc. IEEE Int. Conf. Comput. Vis.}, pages 14543--14553, 2023.

\bibitem{duval2023faenet}
Alexandre~Agm Duval, Victor Schmidt, Alex Hernandez-Garcia, Santiago Miret, Fragkiskos~D Malliaros, Yoshua Bengio, and David Rolnick.
\newblock {FaeNet}: {Frame} averaging equivariant gnn for materials modeling.
\newblock In {\em Proc. Int. Conf. Mach. Learning.}, pages 9013--9033. PMLR, 2023.

\bibitem{loshchilov2018decoupled}
Ilya Loshchilov and Frank Hutter.
\newblock Decoupled weight decay regularization.
\newblock In {\em Proc. Int. Conf. Learning Representations.}, 2018.

\bibitem{erler2020points2surf}
Philipp Erler, Paul Guerrero, Stefan Ohrhallinger, Niloy~J Mitra, and Michael Wimmer.
\newblock Points2surf learning implicit surfaces from point clouds.
\newblock In {\em Proc. Eur. Conf. Comput. Vis.}, pages 108--124, 2020.

\bibitem{curless1996volumetric}
Brian Curless and Marc Levoy.
\newblock A volumetric method for building complex models from range images.
\newblock In {\em Proc. ACM {SIGGRAPH}}, pages 303--312, 1996.

\bibitem{hua2016scenenn}
Binh-Son Hua, Quang-Hieu Pham, Duc~Thanh Nguyen, Minh-Khoi Tran, Lap-Fai Yu, and Sai-Kit Yeung.
\newblock {SceneNN}: A scene meshes dataset with annotations.
\newblock In {\em Int. Conf. 3D Vis.}, pages 92--101, 2016.

\bibitem{de2023iterativepfn}
Dasith de~Silva~Edirimuni, Xuequan Lu, Zhiwen Shao, Gang Li, Antonio Robles-Kelly, and Ying He.
\newblock {IterativePFN}: True iterative point cloud filtering.
\newblock In {\em Proc. IEEE Conf. Comput. Vis. Pattern Recog.}, pages 13530--13539, 2023.

\bibitem{yu2018pu}
Lequan Yu, Xianzhi Li, Chi-Wing Fu, Daniel Cohen-Or, and Pheng-Ann Heng.
\newblock Pu-net: Point cloud upsampling network.
\newblock In {\em Proc. IEEE Conf. Comput. Vis. Pattern Recog.}, pages 2790--2799, 2018.

\bibitem{rakotosaona2020pointcleannet}
Marie-Julie Rakotosaona, Vittorio La~Barbera, Paul Guerrero, Niloy~J Mitra, and Maks Ovsjanikov.
\newblock Pointcleannet: {Learning} to denoise and remove outliers from dense point clouds.
\newblock In {\em Comput. Graph. Forum}, volume~39, pages 185--203. Wiley Online Library, 2020.

\bibitem{pistilli2020gpd}
Francesca Pistilli, Giulia Fracastoro, Diego Valsesia, and Enrico Magli.
\newblock Learning graph-convolutional representations for point cloud denoising.
\newblock In {\em Proc. Eur. Conf. Comput. Vis.}, pages 103--118. Springer, 2020.

\bibitem{luo2020differentiable}
Shitong Luo and Wei Hu.
\newblock Differentiable manifold reconstruction for point cloud denoising.
\newblock In {\em Proc. ACM Int. Conf. Multimedia.}, pages 1330--1338, 2020.

\bibitem{mao2022pdflow}
Aihua Mao, Zihui Du, Yu-Hui Wen, Jun Xuan, and Yong-Jin Liu.
\newblock {PD-Flow}: A point cloud denoising framework with normalizing flows.
\newblock In {\em Proc. Eur. Conf. Comput. Vis.}, pages 398--415. Springer, 2022.

\bibitem{luo2021score}
Shitong Luo and Wei Hu.
\newblock Score-based point cloud denoising.
\newblock In {\em Proc. IEEE Int. Conf. Comput. Vis.}, pages 4583--4592, 2021.

\bibitem{zhang2020pointfilter}
Dongbo Zhang, Xuequan Lu, Hong Qin, and Ying He.
\newblock Pointfilter: Point cloud filtering via encoder-decoder modeling.
\newblock {\em IEEE Trans. Vis. Comput. Graph.}, 27(3):2015--2027, 2020.

\end{thebibliography}
}

\newpage
\appendix
\section*{\centering{Appendix}}
\section{Theoretical Proof}
\label{appendix:proof}
In the following section, we present a theoretical proof to substantiate Frame Avereage in $\mathbb{R}^{3}$, which is more concise compared to Puny \textit{et al.}~\cite{Puny_2021frameaverage}. 

\newtheorem{theorem}{Theorem}

\begin{theorem}
Let $\mathbf{x}\in\mathbb{R}^{m\times 3}$ be an input point patch and $f\in \mathrm{E}(3)$ be an operation. Then the function $\Phi$ defined in Eq.~\ref{eq:frame_ave} satisfies  
\begin{equation}
\Phi(f(\mathbf{x})) = f \Phi(\mathbf{x})   
\end{equation}

\end{theorem}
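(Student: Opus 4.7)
The plan is to reduce the claim to an equivariance property of the frame-set construction itself, and then carry out a straightforward change of variables in the averaging formula. Write $f = (\mathbf{R}, \mathbf{v}) \in \mathrm{E}(3)$, so that $f(\mathbf{x})$ acts on the point patch by $\mathbf{x}_i \mapsto \mathbf{R}\mathbf{x}_i + \mathbf{v}$. The first step is to track how $\mathcal{F}$ transforms: the centroid transforms as $\boldsymbol{t} \mapsto \mathbf{R}\boldsymbol{t} + \mathbf{v}$, and the centered covariance matrix transforms as $\mathbf{C} \mapsto \mathbf{R}\mathbf{C}\mathbf{R}^T$. Since conjugation by an orthogonal matrix preserves the spectrum, the ordered eigenvalues $\lambda_1 \le \lambda_2 \le \lambda_3$ are unchanged, and the corresponding eigenvectors transform as $\boldsymbol{v}_i \mapsto \pm \mathbf{R}\boldsymbol{v}_i$.

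From this I conclude the key lemma, namely
\[
\mathcal{F}(f(\mathbf{x})) \;=\; f \cdot \mathcal{F}(\mathbf{x}) \;\triangleq\; \{\, f \circ g \,:\, g \in \mathcal{F}(\mathbf{x}) \,\},
\]
as sets in $\mathrm{E}(3)$. The inclusion follows because for any sign vector $(\epsilon_1,\epsilon_2,\epsilon_3) \in \{\pm 1\}^3$ the frame $([\epsilon_1 \mathbf{R}\boldsymbol{v}_1, \epsilon_2 \mathbf{R}\boldsymbol{v}_2, \epsilon_3 \mathbf{R}\boldsymbol{v}_3], \mathbf{R}\boldsymbol{t}+\mathbf{v})$ in $\mathcal{F}(f(\mathbf{x}))$ equals $f \circ ([\epsilon_1\boldsymbol{v}_1,\epsilon_2\boldsymbol{v}_2,\epsilon_3\boldsymbol{v}_3],\boldsymbol{t})$ under the semidirect-product composition in $\mathrm{E}(3)$; since the map $g \mapsto f \circ g$ is a bijection on $\mathrm{E}(3)$, the two $8$-element sets coincide. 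In particular $|\mathcal{F}(f(\mathbf{x}))| = |\mathcal{F}(\mathbf{x})|$.

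Given the lemma, the remainder is a reindexing calculation. Substituting $g' = f \circ g$ into the definition of $\Phi$,
\[
\Phi(f(\mathbf{x})) = \frac{1}{|\mathcal{F}(f\mathbf{x})|} \sum_{g' \in \mathcal{F}(f\mathbf{x})} g' \, \phi(g'^{-1} f(\mathbf{x})) = \frac{1}{|\mathcal{F}(\mathbf{x})|} \sum_{g \in \mathcal{F}(\mathbf{x})} (f \circ g)\, \phi\bigl((f \circ g)^{-1} f(\mathbf{x})\bigr).
\]
Using $(f \circ g)^{-1} f = g^{-1} \circ f^{-1} \circ f = g^{-1}$ inside the argument of $\phi$, and pulling the left action of $f$ outside the (linear) sum, yields $f \cdot \Phi(\mathbf{x})$, which is the desired identity.

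The main obstacle is eigenvalue degeneracy: when $\lambda_i = \lambda_{i+1}$ the individual eigenvectors are only defined up to rotation within their eigenspace, so the set $\mathcal{F}(\mathbf{x})$ is not canonically the $8$-element sign-flip set used above. I would handle this either by a genericity assumption (as implicitly made in the frame-averaging literature), or by replacing $\mathcal{F}$ with the full orbit under the stabilizer of the covariance spectrum and verifying that the averaging integral (now over a positive-dimensional set) remains equivariant by the same change-of-variables argument, invoking invariance of Haar measure on $\mathrm{O}(n)$. A second, purely bookkeeping, subtlety is to be careful that the action of $g \in \mathcal{F}(\mathbf{x})$ on a point patch $\mathbf{x} \in \mathbb{R}^{m \times 3}$ (as opposed to on a single point in $\mathbb{R}^3$) is defined consistently, so that the group composition $(f \circ g)^{-1} f = g^{-1}$ really does hold on the patch; this is automatic once one writes the action row-wise.
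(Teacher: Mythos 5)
Your proposal is correct and follows essentially the same route as the paper's own proof: establish the frame-set equivariance $\mathcal{F}(f(\mathbf{x})) = f\cdot\mathcal{F}(\mathbf{x})$ (the paper phrases this via the SVD $\mathbf{x}=\mathbf{U}\Lambda\mathbf{V}^{\top}$ rather than the covariance eigendecomposition, but these are the same computation), then reindex the average and use $(f\circ g)^{-1}\circ f = g^{-1}$ to pull $f$ outside. Your added remarks on eigenvalue degeneracy and on the patch-wise versus point-wise action are legitimate caveats that the paper silently elides, so they strengthen rather than diverge from the argument.
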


\begin{proof}
Assume $\mathscr{F}(\mathbf{x})$ is defined as follows: 
\begin{align}
 \mathscr{F}(\mathbf{x})&=   \left\{ \left( [\pm \boldsymbol{v}_1, \pm\boldsymbol{v}_2, \pm \boldsymbol{v}_3], \boldsymbol{t} \right)  \right\}  \\
&=\{(\mathbf{U} \mathbf{T}_i, t) \mid \mathbf{x}=\mathbf{U} \Lambda \mathbf{V}^{\top}, \mathbf{t}=\frac{1}{m}\mathbf{x}^{\top}\mathbf{1}_{m\times 1}, \mathbf{T}_i \in  \mathrm{D}\}. 
\end{align}where $\mathbf{U \Lambda V^{\top}}$ is the singular value decomposition of the matrix $\mathbf{x}$, $\mathbf{D}=\{[\pm \boldsymbol{v}_1, \pm\boldsymbol{v}_2, \pm \boldsymbol{v}_3]\}$and $\mathbf{t}$ is the mean of $\mathbf{x}$ along its second dimension.

Next, assume a spatial transformation $f = (\mathbf{U}_1, \mathbf{t}_1) \in \mathrm{E}(3)$. When applying $f$ to $\mathbf{x}$, $\mathscr{F}(\mathbf{x})$ is modified to $\mathscr{F}(f(\mathbf{x}))$, resulting in: 
\[ \mathscr{F}(f(\mathbf{x})) = \left\{ ( \mathbf{U}_1  \mathbf{UT}_i,  \mathbf{t}_1 + \mathbf{t}) \right\} \]

For the calculation of $\Phi(f(\mathbf{x}))$, the function $\Phi$ applied to $f(\mathbf{x})$ is defined as: 
\[ \Phi(f(\mathbf{x})) = \frac{1}{|\mathcal{F}(f(\mathbf{x}))|} \sum_{g_i \in \mathscr{F}(f(\mathbf{x}))} g_i \phi(g_i^{-1} f(\mathbf{x})) \]

Substituting the above definitions into $\Phi(f(\mathbf{x}))$ we get: 
\begin{align}
\Phi(f(\mathbf{x})) &= \frac{1}{8} \sum_{\mathbf{T}_i \in \mathrm{ D}} \left( \mathbf{U}_1 \mathbf{U}\mathbf{T}_i,  \mathbf{t}_1 + \mathbf{t}\right) \phi\left(\left(\mathbf{T}_i \mathbf{U}^{\top}  \mathbf{U}_1^{\top}, - \mathbf{t_1} - \mathbf{t}\right) \left( \mathbf{U}_1,  \mathbf{t}_1\right)(\mathbf{x})\right) \\
&= \frac{1}{8} \sum_{\mathbf{T}_i \in  \mathrm{D}} \left( \mathbf{U}_1,  \mathbf{t}_1\right)\left(\mathbf{U T}_i, \mathbf{t}\right) \phi\left(\left(\mathbf{T}_i \mathbf{U}^{\top}, -\mathbf{t}\right)(\mathbf{x})\right) \\
&= \frac{1}{8} \sum_{\mathbf{T}_i \in  \mathrm{D}} f g_i \phi(g_i^{-1}(\mathbf{x})) \\
&= f \frac{1}{|\mathcal{F}(\mathbf{x})|} \sum_{g_i \in \mathscr{F}(\mathbf{x})} g_i \phi(g_i^{-1}(\mathbf{x})) \\
&= f \Phi(\mathbf{x})
\end{align} 
This concludes the proof.
\end{proof}

\section{Details of the Used Datasets}
\label{appendix:dataset}
We give a detailed description of the used datasets in the paper. 
\begin{figure}[H]
    \centering
    \includegraphics[width=1\linewidth]{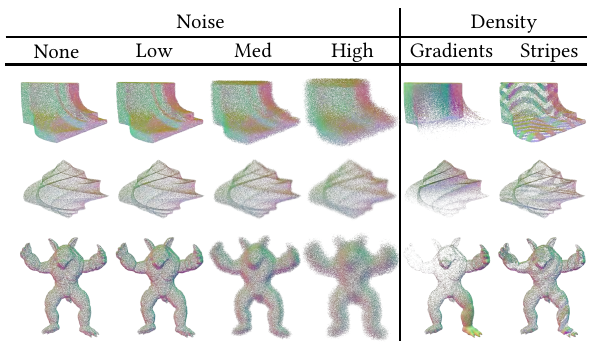}
    \caption{Examples of point clouds from the PCPNet Dataset.}
    \label{fig:enter-label4}
\end{figure}
\para{PCPNet.} The PCPNet dataset~\cite{guerrero2018pcpnet} is divided into three sets: training, validation, and test sets. The training set comprises eight models, each available in noise-free, high, medium, and low noise versions, resulting in a total of 32 point clouds with consistent density. The validation set consists of three models, each available in noise-free, high, medium, and low noise versions, as well as stripes and gradient variable density versions, resulting in a total of 18 point clouds. The test set applies these transformations to 18 models, resulting in a total of 108 point clouds for evaluation. Fig. \ref{fig:enter-label4} presents several illustrative examples from PCPNet dataset. 

% Our method involves exclusive training on the training set of the PCPNet dataset. Subsequent tests are conducted directly on the test sets of the PCPNet dataset, as well as the FamousShape~\cite{li2023shs} dataset, and the real-world SceneNN~\cite{li2022hsurf} dataset. This testing protocol is executed without any additional fine-tuning of the model.

\begin{table*}[t]
  \centering
\caption{Quantitative comparisons of normal estimation on the FamousShape dataset. \textbf{Bold} fonts indicate the top performer.}
\begin{adjustbox}{width=0.9\textwidth}
\begin{tabular}{c|c|c|c|c|c|c|c|c}
    \hline 
    \multirow{3}{*}{Method} & \multirow{3}{*}{Year} & \multicolumn{7}{c}{FamousShape Dataset} \\ \cline{3-9}
    & & \multicolumn{4}{c|}{Noise $\sigma$} & \multicolumn{2}{c|}{Density} & \multirow{2}{*}{Average} \\ \cline{3-8}
    & & None & $0.12 \%$ & $0.6 \%$ & $1.2 \%$ & Stripes & Gradient & \\
\hline Jet \cite{cazals2005estimatingnjet}  &  2005 & 20.11 & 20.57 & 31.34 & 45.19 & 18.82 & 18.69 & 25.79 \\
\hline PCA \cite{hoppe1992surfacepca}  & 1992 & 19.90 & 20.60 & 31.33 & 45.00 & 19.84 & 18.54 & 25.87 \\
\hline PCPNet \cite{guerrero2018pcpnet}  & 2018 & 24.71 & 28.00 & 40.26 & 49.78 & 25.98 & 26.12&32.47 \\
% \hline Zhou et al.  &[Zhou et al. 2020b] & - & - & - & - & - & - & - \\
\hline Nesti-Net \cite{ben2019nesti}  & 2019 & 11.60 & 16.80 & 31.61 & 39.22 & 12.33 & 11.77 & 20.55 \\
\hline Lenssen et al. \cite{lenssen2020deep} & 2020 & 11.62 & 16.97 & 30.62 & 39.43 & 11.21 & 10.76 & 20.10 \\
\hline DeepFit \cite{ben2020deepfit} & 2020 & 11.21 & 16.39 & 29.84 & 39.95 & 11.84 & 10.54 & 19.96 \\
% \hline MTRNet  &[Cao et al. 2021] & - & - & - & - & - & - & - \\
% \hline Refine-Net  &[Zhou et al. 2022] & - & - & - & - & - & - & - \\
\hline Zhang et al. \cite{zhang2022geometry}  &2022 & 9.83 & 16.13 & 29.81 & 39.81 & 9.72 & 9.19 & 19.08 \\
% \hline Zhou et al.  &[Zhou et al. 2023] & - & - & - & - & - & - & - \\
\hline AdaFit \cite{zhu2021adafit} & 2021 & 9.09 & 15.78 & 29.78 & 38.74 & 8.52 & 8.57 & 18.41 \\
\hline GraphFit \cite{li2022graphfit}  & 2022 & 8.91 & 15.73 & 29.37 & 38.67 & 9.10 & 8.62 & 18.40 \\
\hline NeAF \cite{li2023neaf} & 2023 & 7.67 & 15.67 & 29.75 & 38.76 & 7.22 & 7.47 & 17.76 \\
\hline HSurf-Net \cite{li2022hsurf} &2022 & 7.59 & 15.64 & 29.43 & 38.54 & 7.63 & 7.40 & 17.70 \\
\hline NGLO \cite{li2023neural}& 2023& 7.25 & 15.60 & 29.35 & 38.74 & 7.60 & 7.20 & 17.62 \\
\hline SHS-Net \cite{li2023shs} &2023 & 7.41 & 15.34 & 29.33 & 38.56 & 7.74 & 7.28 & 17.61 \\
\hline MSECNet \cite{xiu2023msecnet} &2023& 6.86 & 15.54 & 29.24 & 38.16 & 6.64 & 6.71 & 17.19 \\
\hline Ours &-& $\textbf{6.45}$ & $\textbf{15.03}$  & $\textbf{28.76}$  & $\textbf{37.75}$  &  
$\textbf{6.31}$  &$\textbf{ 6.30}$  &$\textbf{ 16.77}$  \\
\hline \multicolumn{2}{c|}{Improvement} & 5.98\%$\uparrow$ & 2.02\%$\uparrow$ & 1.64\%$\uparrow$ & 1.07\%$\uparrow$ & 4.97\%$\uparrow$ & 6.11\%$\uparrow$& 2.44\%$\uparrow$ \\ 
\hline

% [6.305293812471038, 6.311851185374287, 6.45449044021852, 37.7543201071662, 15.033605741205937, 28.76580017590352] | Mean: 16.770893577056583
% [6.713078141105221, 6.640532981289575, 6.856936353372905, 38.163801134761755, 15.540559293802145, 29.2438612542922] | Mean: 17.193128193103966
  \end{tabular}
\end{adjustbox}
  \label{tab:famous}

\end{table*}
\para{FamousShape.}
In comparison to the PCPNet dataset, the FamousShape dataset~\cite{li2023shs} incorporates more intricate structures obtained from various public datasets, such as the Famous dataset~\cite{erler2020points2surf} and the Stanford 3D Scanning Repository~\cite{curless1996volumetric}. The FamousShape dataset undergoes the same preprocessing steps as the PCPNet dataset to introduce data contamination.

\para{SceneNN.}
SceneNN is a real-world dataset that comprises both clean and noisy scenes sourced from SceneNN~\cite{hua2016scenenn}, prepared by \cite{li2022hsurf}.

\section{Assessments on the FamousShape Dataset}
\label{appendix:FamousShape}
\begin{figure*}
    \centering
    \includegraphics[width=1\linewidth]{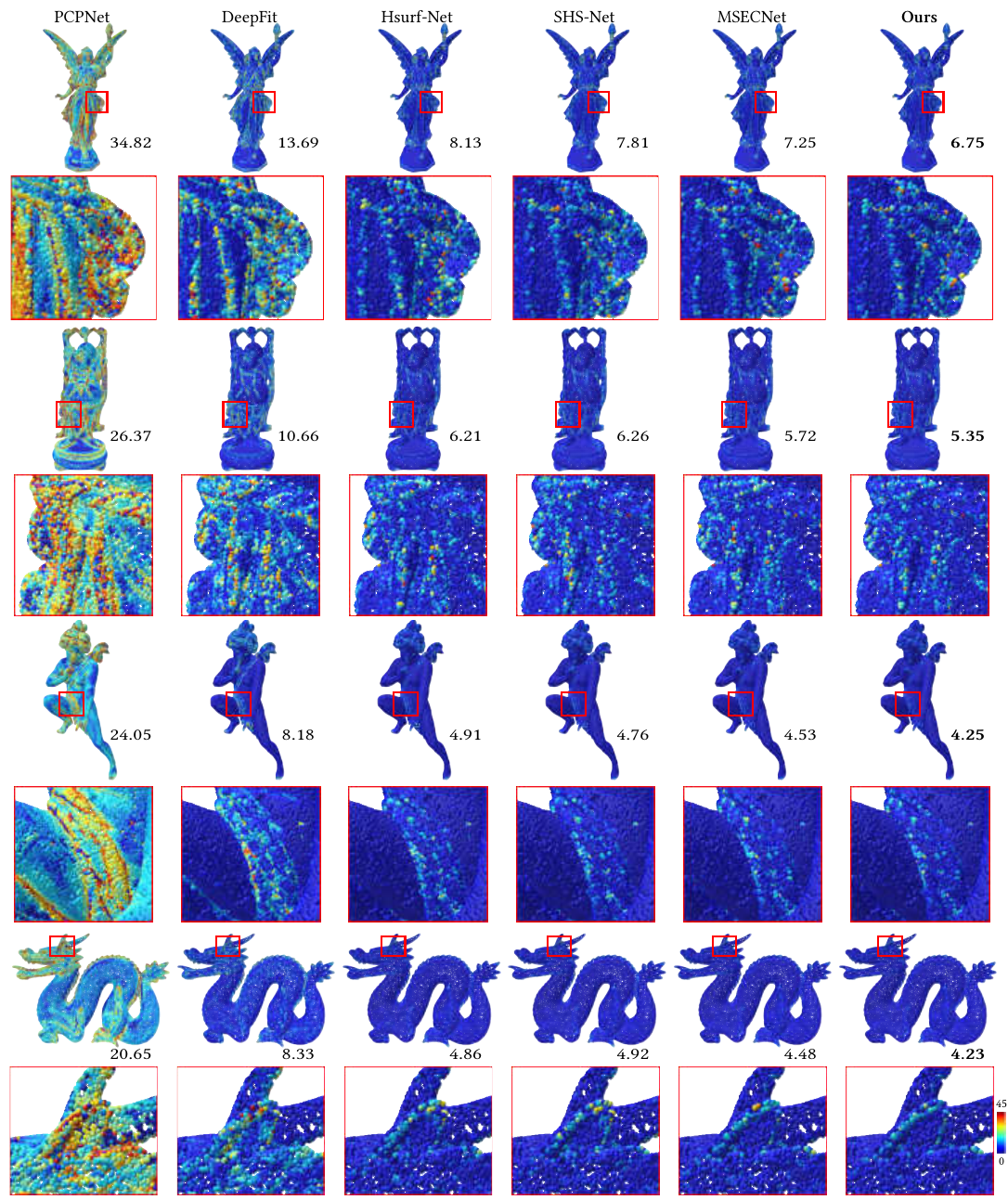}
        %\vskip -0.3cm
    \caption{Qualitative comparisons of normal estimation on the FamousShape dataset. The values below each model indicate the RMSE deviation.}
    \label{fig:famous1}
\end{figure*}

\begin{figure*}
    \centering
    \includegraphics[width=1\linewidth]{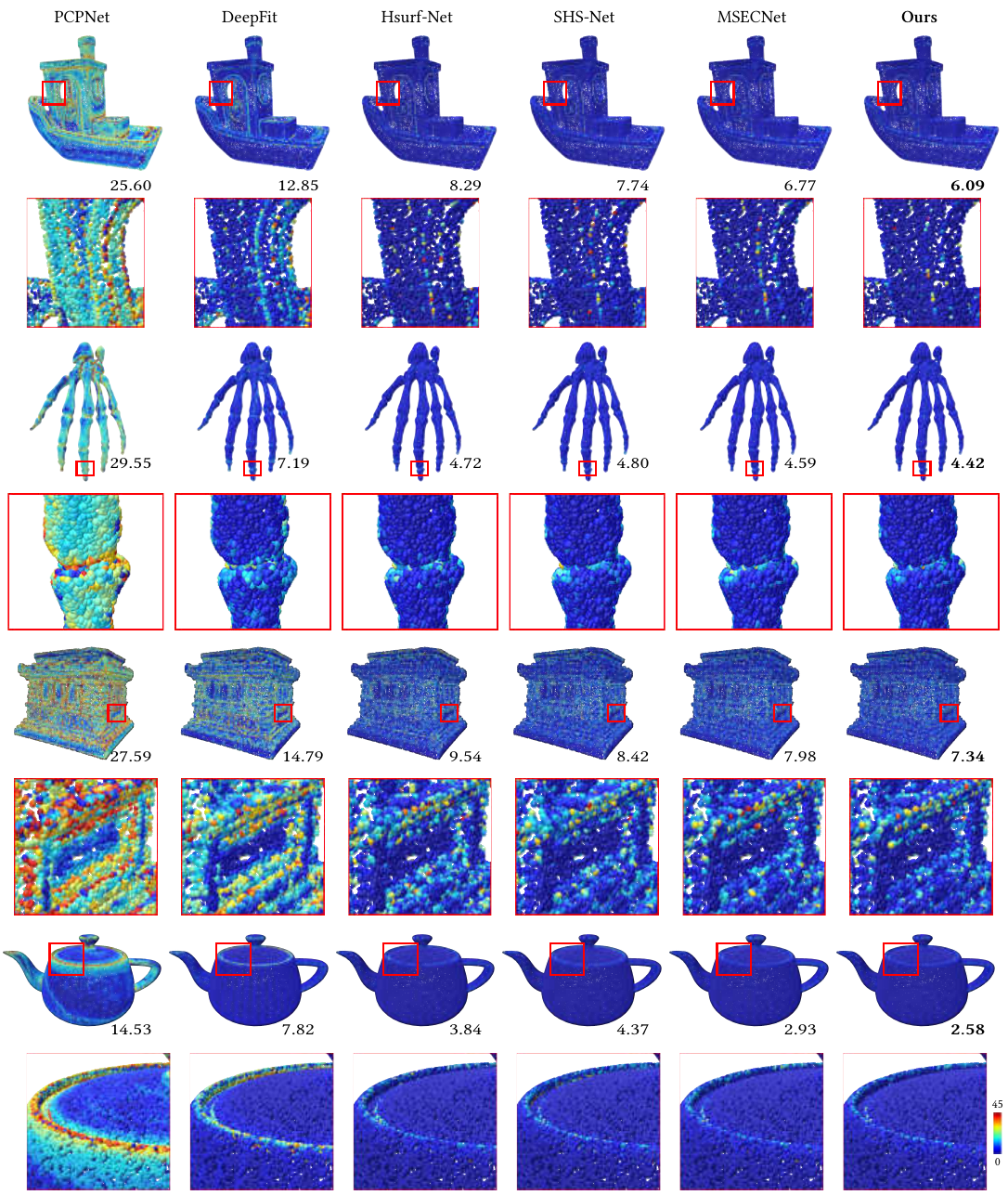}
    %\vskip -0.3cm
    \caption{Qualitative comparisons of normal estimation on the FamousShape dataset. The values below each model indicate the RMSE deviation.}
    \label{fig:famous2}
\end{figure*}

\begin{figure*}
    \centering
    \includegraphics[width=1\linewidth]{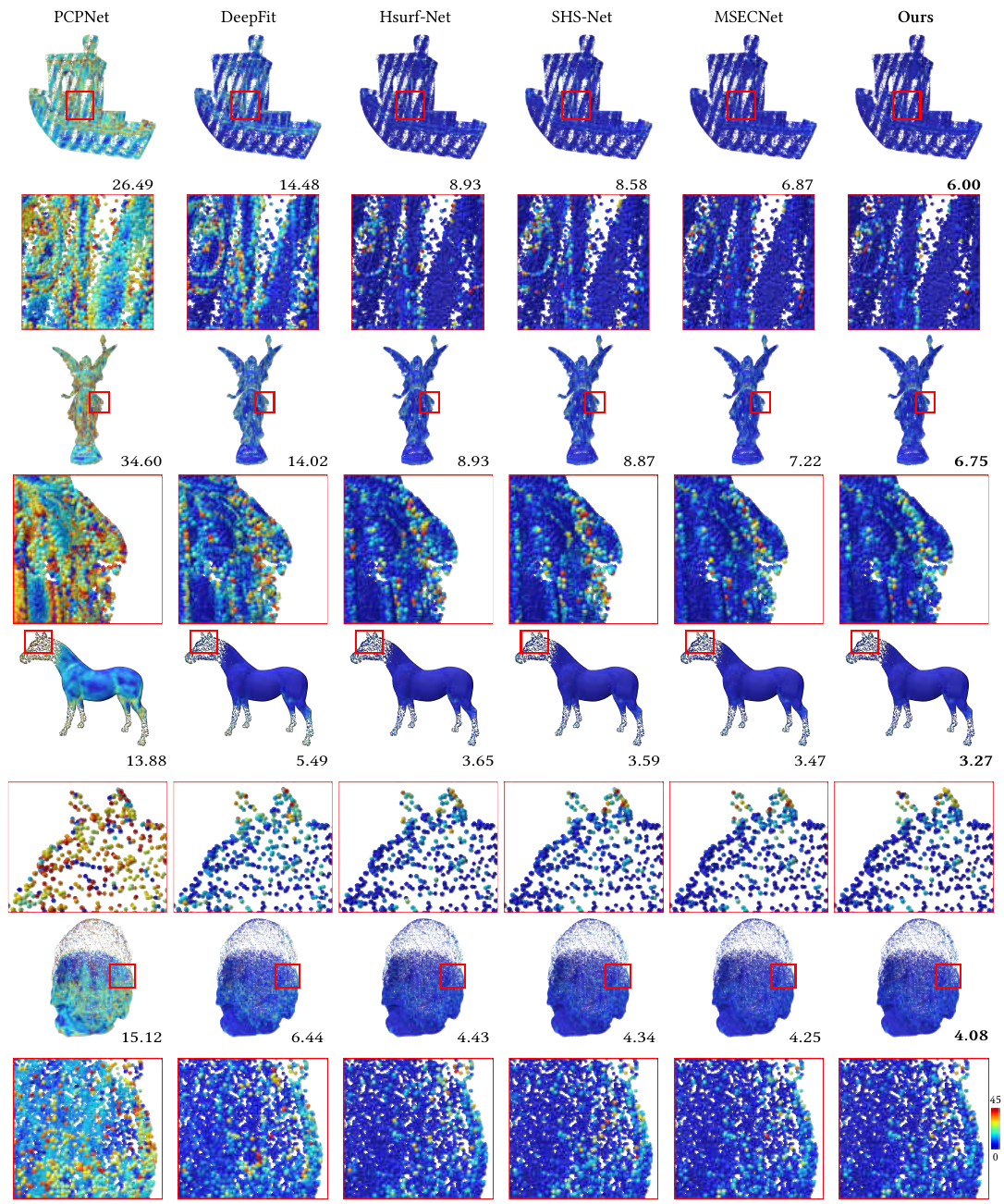}
    %\vskip -0.3cm
    \caption{Qualitative comparisons of normal estimation of non-uniform density point clouds in the FamousShape dataset. The values below each model indicate the RMSE deviation.}
    \label{fig:famous3}
\end{figure*}

\begin{figure*}
    \centering
    \includegraphics[width=1\linewidth]{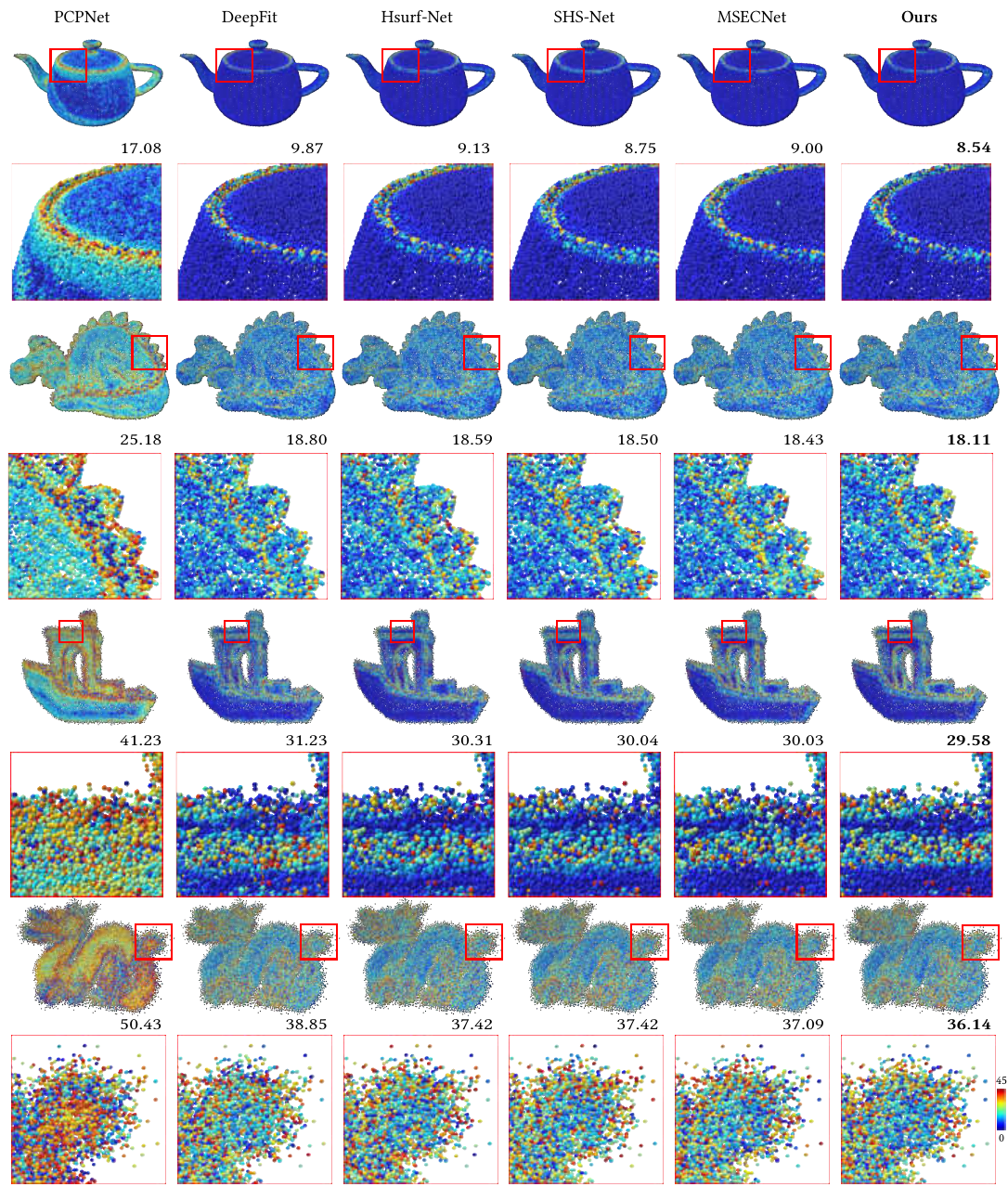}
    %\vskip -0.3cm
    \caption{Qualitative comparisons of normal estimation of noisy point clouds in the FamousShape dataset. The values below each model indicate the RMSE deviation.}
    \label{fig:famous4}
\end{figure*}
In our paper, we have conducted a thorough evaluation of the proposed algorithm under various conditions. Additionally, we have performed assessments on the more challenging FamousShape dataset. When evaluated under noise-free and uniform sampling settings, as depicted in Figs.~\ref{fig:famous1}-\ref{fig:famous2}, the algorithm demonstrates significant improvements in processing complex human and animal models. Notably, enhancements can be observed in intricate aspects such as clothing folds in the Happy, Angel, and Lucy models, as well as the fine edges of the dragon's ears, highlighting the algorithm's ability to handle detailed structures effectively.

In gradient sampling scenarios, as depicted in Fig.~\ref{fig:famous3}, the proposed algorithm demonstrates exceptional capability, particularly in regions with noticeable variations in sampling density, as exemplified by the Serapis model. It significantly enhances finer details, such as the ears of the Horse model and the sparsely covered head. These results highlight the algorithm's precision and effectiveness in handling challenging gradient sampling scenarios. Additionally, in the case of stripe sampling, as illustrated in Fig.~\ref{fig:famous3}, the algorithm successfully handles complex areas such as Benchy's wall surfaces and the sparsely covered skirt, demonstrating consistent performance and accuracy even in challenging conditions.

 Lastly, in challenging noisy environments, as depicted in Fig.~\ref{fig:famous4}, the algorithm effectively handles intricate areas, including the edges of the teapot, the protrusions on the dragon's back, the edges of Benchy, and the tail of the dragon. These results affirm the algorithm's resilience and efficiency in managing detailed and challenging regions under different sampling and noise conditions.

\section{Algorithm}
\label{appendix:pseudo}
To enhance the comprehensibility of our developed algorithm for normal estimation, we also present the pseudo code in Algorithm \ref{alg:point_cloud_normal_estimation}.

\begin{algorithm}[t]
\caption{Pseudo code of the proposed normal estimation algorithm}
\label{alg:point_cloud_normal_estimation}
    \renewcommand{\algorithmicrequire}{\textbf{Input:}}
    \renewcommand{\algorithmicensure}{\textbf{Output:}}
    
\begin{algorithmic}[1]
    \REQUIRE Point Cloud $P$
    \ENSURE Normal $N$ of $P$
    
    \STATE $K \leftarrow \text{InitializeKDTree}(P)$
    \STATE $G \leftarrow \text{ConstructGraph}(P, K)$
    \STATE $C \leftarrow \emptyset$, $W \leftarrow \emptyset$, $N_{temp} \leftarrow \emptyset$, $N_{patch} \leftarrow \emptyset$, $N_{final} \leftarrow \emptyset$
    \WHILE{$C \neq P$}
        \STATE $q \leftarrow \text{SelectUncoveredPoint}(P, C)$
        \IF{$q$ not in independent cluster}
            \STATE $p \leftarrow \text{ConstructPatchDijkstra}(q, G)$
        \ELSE
            \STATE $p \leftarrow \text{ConstructPatchKNN}(q, K)$
        \ENDIF
        \STATE Add points in $p$ close to the center to $C$

        \STATE $N_{temp} \leftarrow \emptyset$
        \FOR{each orientation in Frame}
            \STATE $p_{aligned} \leftarrow \text{FrameTransform}(p)$
            \STATE $n_{out} \leftarrow \text{Network}(p_{aligned})$
            \STATE $n \leftarrow \text{FrameInverseTransform}(n_{out})$           
            \STATE $N_{temp} \leftarrow N_{temp} \cup{n}$
        \ENDFOR
        \STATE Sort $p$ based on distance to $q$
        \STATE $p_{half} \leftarrow$ Nearest half of $p$ to $q$
        \STATE $N_{patch} \leftarrow N_{patch} \cup \{ \text{mean}(N_{temp}) \}$
        \STATE $W \leftarrow W \cup \{ \text{GaussianWeights}(p_{half}, q) \}$
    \ENDWHILE

    \FOR{each $i$ in $P$}
        \STATE $N_i \leftarrow \text{CalculateFinalNormal}(i, N_{patch}, W)$
        \STATE $N_{final} \leftarrow N_{final} \cup \{ N_i \}$
    \ENDFOR

    \RETURN $N_{final}$ 
\end{algorithmic}
\end{algorithm}

\section{Point Cloud Denoising}
\label{appendix:denoise}
To validate the capability of our proposed approach, which combines random frame training with average frame inference, we also apply our framework to point cloud denoising. We utilize the current state-of-the-art algorithm, IterativePFN~\cite{de2023iterativepfn}, as the backbone and conduct training and testing on the benchmark PUNet~\cite{yu2018pu} dataset.
We use Chamfer distance (CD) and the Point2Mesh distance (P2M) as evaluation metrics to measure the denoising effects. All the result metrics are multiplied by \(10^5\).
IterativePFN employs a strategy of random rotation during training for data augmentation. Our improvement is to apply our random frame training strategy during training and utilize the average frame strategy during inference. The results, as shown in Table~\ref{tab:denoise} and Fig.~\ref{fig:denoise}, reveal that our algorithm outperforms IterativePFN in various scenarios, including point clouds of densities 10k and 50k, and noise levels of 1\%, 2\%, and 2.5\%. As demonstrated in Fig.~\ref{fig:denoise}, our algorithm exhibits improvements in both sharp edges and flat surfaces.

Our random frame strategy surpasses conventional random rotation training by eliminating the need for the network to learn from point cloud patches in all poses. Instead, it selectively uses the frame set for training, resulting in higher accuracy and better data utilization. This method is more efficient and effective for point cloud learning compared to previous random rotation approaches.

\begin{figure*}
    \centering
    \includegraphics[width=1\linewidth]{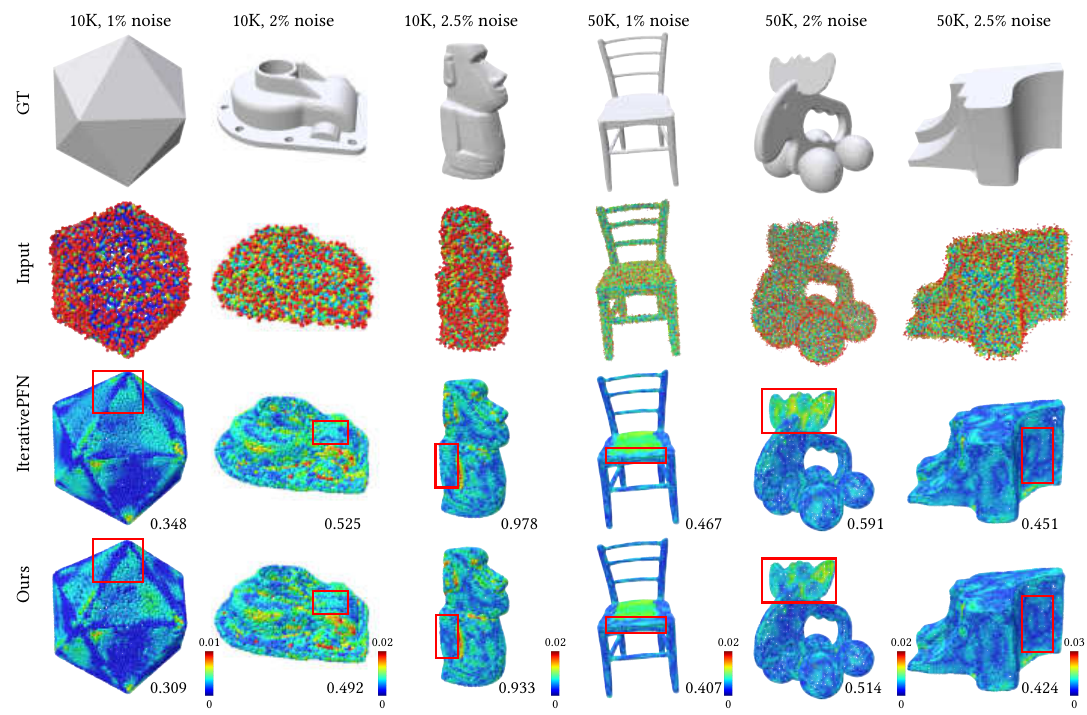}
    %\vskip -0.3cm
    \caption{Qualitative comparisons of point cloud denoising on the PUNet dataset. The values below each model indicate the Point2Mesh distance (P2M). The numbers above each column represent the size of the point cloud and the noise level.}
    \label{fig:denoise}
\end{figure*}

\begin{table*}[t]
  \centering
\caption{Quantitative comparisons of point cloud denoising  on the PUNet dataset. \textbf{Bold} fonts indicate the top performer.}
%\scalebox{0.75}{
\renewcommand{\arraystretch}{1.5} 
\begin{adjustbox}{width=\textwidth}
\begin{tabular}{c|c|c|c|c|c|c|c|c|c|c|c|c}

\hline 
\multirow{3}{*}{Method} & \multicolumn{6}{c|}{$10K$ points} & \multicolumn{6}{c}{$50K$ points} \\ 
\cline{2-13} 
& \multicolumn{2}{c|}{$1\%$ noise} & \multicolumn{2}{c|}{$2\%$ noise} & \multicolumn{2}{c|}{$2.5\%$ noise} & \multicolumn{2}{c|}{$1\%$ noise} & \multicolumn{2}{c|}{$2\%$ noise} & \multicolumn{2}{c}{$2.5\%$ noise} \\
\cline{2-13} 
& CD & P2M & CD & P2M & CD & P2M & CD & P2M & CD & P2M & CD & P2M \\

\hline Noisy & 36.9 & 16.03 & 79.39 & 47.72 & 105.02 & 70.03 & 18.69 & 12.82 & 50.48 & 41.36 & 72.49 & 62.03 \\
\hline PCN~\cite{rakotosaona2020pointcleannet} & 36.86 & 15.99 & 79.26 & 47.59 & 104.86 & 69.87 & 11.03 & 6.46 & 19.78 & 13.7 & 32.03 & 24.86 \\
\hline GPDNet~\cite{pistilli2020gpd} & 23.1 & 7.14 & 42.84 & 18.55 & 58.37 & 30.66 & 10.49 & 6.35 & 32.88 & 25.03 & 50.85 & 41.34 \\
\hline DMRDenoise~\cite{luo2020differentiable} & 47.12 & 21.96 & 50.85 & 25.23 & 52.77 & 26.69 & 12.05 & 7.62 & 14.43 & 9.7 & 16.96 & 11.9 \\
\hline PDFlow~\cite{mao2022pdflow} & 21.26 & 6.74 & 32.46 & 13.24 & 36.27 & 17.02 & 6.51 & 4.16 & 12.7 & 9.21 & 18.74 & 14.26 \\
\hline ScoreDenoise~\cite{luo2021score} & 25.22 & 7.54 & 36.83 & 13.8 & 42.32 & 19.04 & 7.16 & 4.0 & 12.89 & 8.33 & 14.45 & 9.58 \\
\hline Pointfilter~\cite{zhang2020pointfilter} & 24.61 & 7.3 & 35.34 & 11.55 & 40.99 & 15.05 & 7.58 & 4.32 & 9.07 & 5.07 & 10.99 & 6.29 \\
\hline IterativePFN~\cite{de2023iterativepfn} & 20.56 & 5.01 & 30.43 & 8.45 & 33.52 & 10.45 & 6.05 & 3.02 & 8.03 & 4.36 & 10.15 & 5.88 \\
\hline Ours & \textbf{19.87} & \textbf{4.93} &\textbf{29.99} & \textbf{8.18} & \textbf{33.03} & \textbf{10.11} & \textbf{5.92}& \textbf{2.94} & \textbf{7.78} & \textbf{4.22} & \textbf{9.93} & \textbf{5.78}\\
\hline
\end{tabular}
%}
\end{adjustbox}
  \label{tab:denoise}

\end{table*}
% \bibliographystyle{ACM-Reference-Format}

% {\small
% 	% \bibliographystyle{unsrt}
% 	%\bibliographystyle{ieeenat_fullname}
% 	% \bibliography{appendix}
% }	

%%
%% If your work has an appendix, this is the place to put it.

% \section{Research Methods}

% \subsection{Part One}

% Lorem ipsum dolor sit amet, consectetur adipiscing elit. Morbi
% malesuada, quam in pulvinar varius, metus nunc fermentum urna, id
% sollicitudin purus odio sit amet enim. Aliquam ullamcorper eu ipsum
% vel mollis. Curabitur quis dictum nisl. Phasellus vel semper risus, et
% lacinia dolor. Integer ultricies commodo sem nec semper.

% \subsection{Part Two}

% Etiam commodo feugiat nisl pulvinar pellentesque. Etiam auctor sodales
% ligula, non varius nibh pulvinar semper. Suspendisse nec lectus non
% ipsum convallis congue hendrerit vitae sapien. Donec at laoreet
% eros. Vivamus non purus placerat, scelerisque diam eu, cursus
% ante. Etiam aliquam tortor auctor efficitur mattis.

% \section{Online Resources}

% Nam id fermentum dui. Suspendisse sagittis tortor a nulla mollis, in
% pulvinar ex pretium. Sed interdum orci quis metus euismod, et sagittis
% enim maximus. Vestibulum gravida massa ut felis suscipit
% congue. Quisque mattis elit a risus ultrices commodo venenatis eget
% dui. Etiam sagittis eleifend elementum.

% Nam interdum magna at lectus dignissim, ac dignissim lorem
% rhoncus. Maecenas eu arcu ac neque placerat aliquam. Nunc pulvinar
% massa et mattis lacinia.

\newpage
\section*{NeurIPS Paper Checklist}

\begin{enumerate}

\item {\bf Claims}
    \item[] Question: Do the main claims made in the abstract and introduction accurately reflect the paper's contributions and scope?
    \item[] Answer: \answerYes{}
    \item[] Justification: The main claims made in the abstract and introduction, including the efficient E(3)-equivariant method, the Gaussian weight loss function, and receptive-aware inference strategies, accurately reflect the paper's contributions and scope.
    \item[] Guidelines:
    \begin{itemize}
        \item The answer NA means that the abstract and introduction do not include the claims made in the paper.
        \item The abstract and/or introduction should clearly state the claims made, including the contributions made in the paper and important assumptions and limitations. A No or NA answer to this question will not be perceived well by the reviewers. 
        \item The claims made should match theoretical and experimental results, and reflect how much the results can be expected to generalize to other settings. 
        \item It is fine to include aspirational goals as motivation as long as it is clear that these goals are not attained by the paper. 
    \end{itemize}

\item {\bf Limitations}
    \item[] Question: Does the paper discuss the limitations of the work performed by the authors?
    \item[] Answer: \answerYes{} % Replace by \answerYes{}, \answerNo{}, or \answerNA{}.
    \item[] Justification: The paper discusses limitations in the conclusions section, such as its reliance on synthetic data for training, which could be mitigated by incorporating real-world data and improved simulators.
    \item[] Guidelines:
    \begin{itemize}
        \item The answer NA means that the paper has no limitation while the answer No means that the paper has limitations, but those are not discussed in the paper. 
        \item The authors are encouraged to create a separate "Limitations" section in their paper.
        \item The paper should point out any strong assumptions and how robust the results are to violations of these assumptions (e.g., independence assumptions, noiseless settings, model well-specification, asymptotic approximations only holding locally). The authors should reflect on how these assumptions might be violated in practice and what the implications would be.
        \item The authors should reflect on the scope of the claims made, e.g., if the approach was only tested on a few datasets or with a few runs. In general, empirical results often depend on implicit assumptions, which should be articulated.
        \item The authors should reflect on the factors that influence the performance of the approach. For example, a facial recognition algorithm may perform poorly when image resolution is low or images are taken in low lighting. Or a speech-to-text system might not be used reliably to provide closed captions for online lectures because it fails to handle technical jargon.
        \item The authors should discuss the computational efficiency of the proposed algorithms and how they scale with dataset size.
        \item If applicable, the authors should discuss possible limitations of their approach to address problems of privacy and fairness.
        \item While the authors might fear that complete honesty about limitations might be used by reviewers as grounds for rejection, a worse outcome might be that reviewers discover limitations that aren't acknowledged in the paper. The authors should use their best judgment and recognize that individual actions in favor of transparency play an important role in developing norms that preserve the integrity of the community. Reviewers will be specifically instructed to not penalize honesty concerning limitations.
    \end{itemize}

\item {\bf Theory Assumptions and Proofs}
    \item[] Question: For each theoretical result, does the paper provide the full set of assumptions and a complete (and correct) proof?
    \item[] Answer: \answerYes{} % Replace by \answerYes{}, \answerNo{}, or \answerNA{}.
    \item[] Justification: The theoretical aspects are thoroughly detailed, with all necessary assumptions and proofs provided in the appendix.

    \item[] Guidelines:
    \begin{itemize}
        \item The answer NA means that the paper does not include theoretical results. 
        \item All the theorems, formulas, and proofs in the paper should be numbered and cross-referenced.
        \item All assumptions should be clearly stated or referenced in the statement of any theorems.
        \item The proofs can either appear in the main paper or the supplemental material, but if they appear in the supplemental material, the authors are encouraged to provide a short proof sketch to provide intuition. 
        \item Inversely, any informal proof provided in the core of the paper should be complemented by formal proofs provided in appendix or supplemental material.
        \item Theorems and Lemmas that the proof relies upon should be properly referenced. 
    \end{itemize}

    \item {\bf Experimental Result Reproducibility}
    \item[] Question: Does the paper fully disclose all the information needed to reproduce the main experimental results of the paper to the extent that it affects the main claims and/or conclusions of the paper (regardless of whether the code and data are provided or not)?
    \item[] Answer: \answerYes{} % Replace by \answerYes{}, \answerNo{}, or \answerNA{}.
    \item[] Justification: The paper includes detailed descriptions of the datasets, model architectures, training parameters, and evaluation metrics to ensure reproducibility.

    \item[] Guidelines:
    \begin{itemize}
        \item The answer NA means that the paper does not include experiments.
        \item If the paper includes experiments, a No answer to this question will not be perceived well by the reviewers: Making the paper reproducible is important, regardless of whether the code and data are provided or not.
        \item If the contribution is a dataset and/or model, the authors should describe the steps taken to make their results reproducible or verifiable. 
        \item Depending on the contribution, reproducibility can be accomplished in various ways. For example, if the contribution is a novel architecture, describing the architecture fully might suffice, or if the contribution is a specific model and empirical evaluation, it may be necessary to either make it possible for others to replicate the model with the same dataset, or provide access to the model. In general. releasing code and data is often one good way to accomplish this, but reproducibility can also be provided via detailed instructions for how to replicate the results, access to a hosted model (e.g., in the case of a large language model), releasing of a model checkpoint, or other means that are appropriate to the research performed.
        \item While NeurIPS does not require releasing code, the conference does require all submissions to provide some reasonable avenue for reproducibility, which may depend on the nature of the contribution. For example
        \begin{enumerate}
            \item If the contribution is primarily a new algorithm, the paper should make it clear how to reproduce that algorithm.
            \item If the contribution is primarily a new model architecture, the paper should describe the architecture clearly and fully.
            \item If the contribution is a new model (e.g., a large language model), then there should either be a way to access this model for reproducing the results or a way to reproduce the model (e.g., with an open-source dataset or instructions for how to construct the dataset).
            \item We recognize that reproducibility may be tricky in some cases, in which case authors are welcome to describe the particular way they provide for reproducibility. In the case of closed-source models, it may be that access to the model is limited in some way (e.g., to registered users), but it should be possible for other researchers to have some path to reproducing or verifying the results.
        \end{enumerate}
    \end{itemize}

\item {\bf Open access to data and code}
    \item[] Question: Does the paper provide open access to the data and code, with sufficient instructions to faithfully reproduce the main experimental results, as described in supplemental material?
    \item[] Answer: \answerYes{} % Replace by \answerYes{}, \answerNo{}, or \answerNA{}.
    \item[] Justification: We provide our code with sufficient instructions to reproduce the main experimental results. Both the training and testing code are included in the supplemental material.

    \item[] Guidelines:
    \begin{itemize}
        \item The answer NA means that paper does not include experiments requiring code.
        \item Please see the NeurIPS code and data submission guidelines (\url{https://nips.cc/public/guides/CodeSubmissionPolicy}) for more details.
        \item While we encourage the release of code and data, we understand that this might not be possible, so “No” is an acceptable answer. Papers cannot be rejected simply for not including code, unless this is central to the contribution (e.g., for a new open-source benchmark).
        \item The instructions should contain the exact command and environment needed to run to reproduce the results. See the NeurIPS code and data submission guidelines (\url{https://nips.cc/public/guides/CodeSubmissionPolicy}) for more details.
        \item The authors should provide instructions on data access and preparation, including how to access the raw data, preprocessed data, intermediate data, and generated data, etc.
        \item The authors should provide scripts to reproduce all experimental results for the new proposed method and baselines. If only a subset of experiments are reproducible, they should state which ones are omitted from the script and why.
        \item At submission time, to preserve anonymity, the authors should release anonymized versions (if applicable).
        \item Providing as much information as possible in supplemental material (appended to the paper) is recommended, but including URLs to data and code is permitted.
    \end{itemize}

\item {\bf Experimental Setting/Details}
    \item[] Question: Does the paper specify all the training and test details (e.g., data splits, hyperparameters, how they were chosen, type of optimizer, etc.) necessary to understand the results?
    \item[] Answer: \answerYes{} % Replace by \answerYes{}, \answerNo{}, or \answerNA{}.
    \item[] Justification: All relevant details regarding the training and testing setups, including data splits, hyperparameters, and optimizers, are provided in the experimental results section and supplementary material.

    \item[] Guidelines:
    \begin{itemize}
        \item The answer NA means that the paper does not include experiments.
        \item The experimental setting should be presented in the core of the paper to a level of detail that is necessary to appreciate the results and make sense of them.
        \item The full details can be provided either with the code, in appendix, or as supplemental material.
    \end{itemize}

\item {\bf Experiment Statistical Significance}
    \item[] Question: Does the paper report error bars suitably and correctly defined or other appropriate information about the statistical significance of the experiments?
    \item[] Answer: \answerNo{} % Replace by \answerYes{}, \answerNo{}, or \answerNA{}.
    \item[] Justification: The paper does not include error bars or statistical significance tests for the experimental results, focusing instead on overall performance metrics.

    \item[] Guidelines:
    \begin{itemize}
        \item The answer NA means that the paper does not include experiments.
        \item The authors should answer "Yes" if the results are accompanied by error bars, confidence intervals, or statistical significance tests, at least for the experiments that support the main claims of the paper.
        \item The factors of variability that the error bars are capturing should be clearly stated (for example, train/test split, initialization, random drawing of some parameter, or overall run with given experimental conditions).
        \item The method for calculating the error bars should be explained (closed form formula, call to a library function, bootstrap, etc.)
        \item The assumptions made should be given (e.g., Normally distributed errors).
        \item It should be clear whether the error bar is the standard deviation or the standard error of the mean.
        \item It is OK to report 1-sigma error bars, but one should state it. The authors should preferably report a 2-sigma error bar than state that they have a 96\% CI, if the hypothesis of Normality of errors is not verified.
        \item For asymmetric distributions, the authors should be careful not to show in tables or figures symmetric error bars that would yield results that are out of range (e.g. negative error rates).
        \item If error bars are reported in tables or plots, The authors should explain in the text how they were calculated and reference the corresponding figures or tables in the text.
    \end{itemize}

\item {\bf Experiments Compute Resources}
    \item[] Question: For each experiment, does the paper provide sufficient information on the computer resources (type of compute workers, memory, time of execution) needed to reproduce the experiments?
    \item[] Answer: \answerYes{} % Replace by \answerYes{}, \answerNo{}, or \answerNA{}.
    \item[] Justification: The paper specifies the use of NVIDIA 4090 GPUs, detailing the training details and batch sizes, thereby providing sufficient information on the compute resources used.

    \item[] Guidelines:
    \begin{itemize}
        \item The answer NA means that the paper does not include experiments.
        \item The paper should indicate the type of compute workers CPU or GPU, internal cluster, or cloud provider, including relevant memory and storage.
        \item The paper should provide the amount of compute required for each of the individual experimental runs as well as estimate the total compute. 
        \item The paper should disclose whether the full research project required more compute than the experiments reported in the paper (e.g., preliminary or failed experiments that didn't make it into the paper). 
    \end{itemize}
    
\item {\bf Code Of Ethics}
    \item[] Question: Does the research conducted in the paper conform, in every respect, with the NeurIPS Code of Ethics \url{https://neurips.cc/public/EthicsGuidelines}?
    \item[] Answer: \answerYes{} % Replace by \answerYes{}, \answerNo{}, or \answerNA{}.
    \item[] Justification:  The research aligns with the NeurIPS Code of Ethics, ensuring transparency, reproducibility, and consideration of ethical implications.
    \item[] Guidelines:
    \begin{itemize}
        \item The answer NA means that the authors have not reviewed the NeurIPS Code of Ethics.
        \item If the authors answer No, they should explain the special circumstances that require a deviation from the Code of Ethics.
        \item The authors should make sure to preserve anonymity (e.g., if there is a special consideration due to laws or regulations in their jurisdiction).
    \end{itemize}

\item {\bf Broader Impacts}
    \item[] Question: Does the paper discuss both potential positive societal impacts and negative societal impacts of the work performed?
    \item[] Answer:  \answerYes{} % Replace by \answerYes{}, \answerNo{}, or \answerNA{}.
    \item[] Justification: We have presented a paragraph to discuss the broader impacts of our work. 
    The paper focuses on a fundamental geometry process task, and thus there is no direct negative societal impacts of the work performed.
    \item[] Guidelines:
    \begin{itemize}
        \item The answer NA means that there is no societal impact of the work performed.
        \item If the authors answer NA or No, they should explain why their work has no societal impact or why the paper does not address societal impact.
        \item Examples of negative societal impacts include potential malicious or unintended uses (e.g., disinformation, generating fake profiles, surveillance), fairness considerations (e.g., deployment of technologies that could make decisions that unfairly impact specific groups), privacy considerations, and security considerations.
        \item The conference expects that many papers will be foundational research and not tied to particular applications, let alone deployments. However, if there is a direct path to any negative applications, the authors should point it out. For example, it is legitimate to point out that an improvement in the quality of generative models could be used to generate deepfakes for disinformation. On the other hand, it is not needed to point out that a generic algorithm for optimizing neural networks could enable people to train models that generate Deepfakes faster.
        \item The authors should consider possible harms that could arise when the technology is being used as intended and functioning correctly, harms that could arise when the technology is being used as intended but gives incorrect results, and harms following from (intentional or unintentional) misuse of the technology.
        \item If there are negative societal impacts, the authors could also discuss possible mitigation strategies (e.g., gated release of models, providing defenses in addition to attacks, mechanisms for monitoring misuse, mechanisms to monitor how a system learns from feedback over time, improving the efficiency and accessibility of ML).
    \end{itemize}
    
\item {\bf Safeguards}
    \item[] Question: Does the paper describe safeguards that have been put in place for responsible release of data or models that have a high risk for misuse (e.g., pretrained language models, image generators, or scraped datasets)?
    \item[] Answer: \answerNA{} % Replace by \answerYes{}, \answerNo{}, or \answerNA{}.
    \item[] Justification:  The paper does not involve the release of high-risk data or models that would require special safeguards.

    \item[] Guidelines:
    \begin{itemize}
        \item The answer NA means that the paper poses no such risks.
        \item Released models that have a high risk for misuse or dual-use should be released with necessary safeguards to allow for controlled use of the model, for example by requiring that users adhere to usage guidelines or restrictions to access the model or implementing safety filters. 
        \item Datasets that have been scraped from the Internet could pose safety risks. The authors should describe how they avoided releasing unsafe images.
        \item We recognize that providing effective safeguards is challenging, and many papers do not require this, but we encourage authors to take this into account and make a best faith effort.
    \end{itemize}

\item {\bf Licenses for existing assets}
    \item[] Question: Are the creators or original owners of assets (e.g., code, data, models), used in the paper, properly credited and are the license and terms of use explicitly mentioned and properly respected?
    \item[] Answer: \answerYes{} % Replace by \answerYes{}, \answerNo{}, or \answerNA{}.
    \item[] Justification: The paper credits all datasets and models used, including proper citations and adherence to license terms where applicable.
    \item[] Guidelines:
    \begin{itemize}
        \item The answer NA means that the paper does not use existing assets.
        \item The authors should cite the original paper that produced the code package or dataset.
        \item The authors should state which version of the asset is used and, if possible, include a URL.
        \item The name of the license (e.g., CC-BY 4.0) should be included for each asset.
        \item For scraped data from a particular source (e.g., website), the copyright and terms of service of that source should be provided.
        \item If assets are released, the license, copyright information, and terms of use in the package should be provided. For popular datasets, \url{paperswithcode.com/datasets} has curated licenses for some datasets. Their licensing guide can help determine the license of a dataset.
        \item For existing datasets that are re-packaged, both the original license and the license of the derived asset (if it has changed) should be provided.
        \item If this information is not available online, the authors are encouraged to reach out to the asset's creators.
    \end{itemize}

\item {\bf New Assets}
    \item[] Question: Are new assets introduced in the paper well documented and is the documentation provided alongside the assets?
    \item[] Answer: \answerNA{} % Replace by \answerYes{}, \answerNo{}, or \answerNA{}.
    \item[] Justification: The paper does not introduce new assets, focusing on the development and evaluation of the proposed method.
    \item[] Guidelines:
    \begin{itemize}
        \item The answer NA means that the paper does not release new assets.
        \item Researchers should communicate the details of the dataset/code/model as part of their submissions via structured templates. This includes details about training, license, limitations, etc. 
        \item The paper should discuss whether and how consent was obtained from people whose asset is used.
        \item At submission time, remember to anonymize your assets (if applicable). You can either create an anonymized URL or include an anonymized zip file.
    \end{itemize}

\item {\bf Crowdsourcing and Research with Human Subjects}
    \item[] Question: For crowdsourcing experiments and research with human subjects, does the paper include the full text of instructions given to participants and screenshots, if applicable, as well as details about compensation (if any)? 
    \item[] Answer: \answerNA{} % Replace by \answerYes{}, \answerNo{}, or \answerNA{}.
    \item[] Justification: The paper does not involve crowdsourcing or research with human subjects.

    \item[] Guidelines:
    \begin{itemize}
        \item The answer NA means that the paper does not involve crowdsourcing nor research with human subjects.
        \item Including this information in the supplemental material is fine, but if the main contribution of the paper involves human subjects, then as much detail as possible should be included in the main paper. 
        \item According to the NeurIPS Code of Ethics, workers involved in data collection, curation, or other labor should be paid at least the minimum wage in the country of the data collector. 
    \end{itemize}

\item {\bf Institutional Review Board (IRB) Approvals or Equivalent for Research with Human Subjects}
    \item[] Question: Does the paper describe potential risks incurred by study participants, whether such risks were disclosed to the subjects, and whether Institutional Review Board (IRB) approvals (or an equivalent approval/review based on the requirements of your country or institution) were obtained?
    \item[] Answer: \answerNA{} % Replace by \answerYes{}, \answerNo{}, or \answerNA{}.
    \item[] Justification: The paper does not involve research with human subjects, thus no IRB approval is required.
    \item[] Guidelines:
    \begin{itemize}
        \item The answer NA means that the paper does not involve crowdsourcing nor research with human subjects.
        \item Depending on the country in which research is conducted, IRB approval (or equivalent) may be required for any human subjects research. If you obtained IRB approval, you should clearly state this in the paper. 
        \item We recognize that the procedures for this may vary significantly between institutions and locations, and we expect authors to adhere to the NeurIPS Code of Ethics and the guidelines for their institution. 
        \item For initial submissions, do not include any information that would break anonymity (if applicable), such as the institution conducting the review.
    \end{itemize}

\end{enumerate}

\end{document}